\newcommand{\ie}{i.e.\ }
\newcommand{\eg}{e.g.\ }
\newcommand{\A}{{\mathcal A}}
\newcommand{\U}{\mathcal{U}}
\newcommand{\R}{{\mathbb R}}
\newcommand{\E}{{\mathbf E}}
\newcommand{\pr}{\mathbf{Pr}}
\newcommand{\ra}{\rangle}
\newcommand{\la}{\langle}
\newcommand{\cond}{\ |\ }
\newcommand{\zo}{\{0,1\}}
\newcommand{\zon}{\{0,1\}^n}
\newcommand{\pmi}{\{-1,1\}}
\newcommand{\pmn}{\{-1,1\}^n}
\newcommand{\pmr}{[-1,1]}
\newcommand{\poly}{\mbox{poly}}
\newcommand{\ti} \tilde
\newcommand{\sgn}{\mathsf{sign}}
\newcommand{\eps}{\epsilon}
\newcommand{\exoracle}[2]{\mbox{EX}(#2,#1)}
\newcommand{\logd}{\log{(1/\delta)}}
\newcommand{\equ}[1]{

\begin{equation}
#1
\end{equation}}
\newcommand{\equn}[1]{
$$
#1
$$}
\newcommand{\ignore}[1]{\relax}
\newcommand{\alequ}[1]{\begin{align} #1 \end{align}}
\newcommand{\alequn}[1]{\begin{align*} #1 \end{align*}}
\newcommand{\eat}[1]{}
\newtheorem{theorem}{Theorem}[section]
\newtheorem{lemma}[theorem]{Lemma}
\newtheorem{corollary}[theorem]{Corollary}
\newtheorem{remark}[theorem]{Remark}
\newtheorem{definition}[theorem]{Definition}
\newcommand{\ptfreconprod}{{\tt PTFconstructProd}\xspace}
\newcommand{\ptfapprox}{{\tt PTFapprox}\xspace}
\newcommand{\ptfapproxprod}{{\tt PTFapproxProd}\xspace}
\newcommand{\dnflearnprod}{{\tt DNFLearnMQProd}\xspace}
\newcommand{\mdnflearnprod}{{\tt MDNFLearnProd}\xspace}
\newenvironment{proof}{\noindent \textbf{Proof:}}{\hfill{$\Box$}}
\title{Learning DNF Expressions from Fourier Spectrum}
\author{Vitaly Feldman \\
IBM Almaden Research Center\\
{\tt vitaly@post.harvard.edu}}
\begin{document}

\maketitle

\begin{abstract}
Since its introduction by Valiant in 1984, PAC learning of DNF expressions remains one of the central problems in learning theory. We consider this problem in the setting where the underlying distribution is uniform, or more generally, a product distribution. \citet*{KalaiST:09} showed that in this setting a DNF expression can be efficiently approximated from its ``heavy" low-degree Fourier coefficients alone. This is in contrast to previous approaches where boosting was used and thus Fourier coefficients of the target function modified by various distributions were needed. This property is crucial for learning of DNF expressions over smoothed product distributions, a learning model introduced by \citet{KalaiST:09} and inspired by the seminal smoothed analysis model of \citet{SpielmanTeng:04}.

We introduce a new approach to learning (or approximating) a polynomial threshold functions which is based on creating a function with range $[-1,1]$ that approximately agrees with the unknown function on low-degree Fourier coefficients. We then describe conditions under which this is sufficient for learning polynomial threshold functions. As an application of our approach, we give a new, simple algorithm for approximating any polynomial-size DNF expression from its ``heavy" low-degree Fourier coefficients alone. Our algorithm greatly simplifies the proof of learnability of DNF expressions over smoothed product distributions and is simpler than all previous algorithm for PAC learning of DNF expression using membership queries. We also describe an application of our algorithm to learning monotone DNF expressions over product distributions. Building on the work of \citet{Servedio:04mondnf}, we give an algorithm that runs in time $\poly((s \cdot \log{(s/\eps)})^{\log{(s/\eps)}}, n)$, where $s$ is the size of the DNF expression and $\eps$ is the accuracy. This improves on $\poly((s \cdot \log{(ns/\eps)})^{\log{(s/\eps)} \cdot \log{(1/\eps)}}, n)$ bound of \citet{Servedio:04mondnf}. Another advantage of our algorithm is that it can be applied to a large class of polynomial threshold functions whereas previous algorithms for both applications relied on the function being a polynomial-size DNF expression.
\end{abstract}

\eat{
\begin{keywords}
PAC learning, polynomial threshold function, smoothed analysis, monotone DNF
\end{keywords}
}

\section{Introduction} \label{sec:intro}
PAC learning of DNF expressions (or formulae) is the problem posed by \citet{Valiant:84} in his seminal work that introduced the PAC model. The original problem asks whether polynomial-size DNF expressions are learnable from random examples on points sampled from an unknown distribution. Despite efforts by numerous researchers, the problem still remains open, with the best algorithm taking $2^{\tilde{O}(\sqrt[3]{n})}$ time \citep{KlivansServedio:04}. In the course of this work, a number of restricted versions of the problem were introduced and studied. One such assumption is that the distribution over the domain (which is the $n$-dimensional hypercube $\pmi^n$) is uniform, or more generally, a product distribution. In this setting a simple quasi-polynomial $n^{O(\log n)}$ algorithm for learning DNF expressions was found by \citet{Verbeurgt:90}. However, no substantially better algorithms are known so far even for much simpler classes such as functions of at most $\log{n}$-variables ($\log{n}$-juntas).

Another natural restriction commonly considered is monotone DNF (MDNF) expressions, \ie those without negated variables. Without restrictions on the distribution, the problem is no easier than the original one \citep{KearnsLPV:87b} but appears to be easier for product distributions. \citet{SakaiMaruoka:00} gave a polynomial-time algorithm for $\log{n}$-term MDNF learning and \citet{BshoutyTamon:96} gave an algorithm for learning a class of functions which includes $O(\log^2{n}/\log\log{n})$-term MDNFs. Most recently, \citet{Servedio:04mondnf} proved a substantially stronger result: $s$-term MDNFs are learnable to accuracy $\eps$ in time polynomial in $(s \cdot \log{(ns/\eps)})^{\log{(s/\eps)}\cdot \log{(1/\eps)}}$ and $n$. In particular, his result implies that $O(2^{\sqrt{\log{n}}})$-term MDNFs are learnable in polynomial time to any constant accuracy. Numerous other restrictions of the original problem were considered. We refer the interested reader to Servedio's paper \citeyearpar{Servedio:04mondnf} for a more detailed overview.

Several works also considered the problem in the stronger {\em membership query} (MQ) model. In this model the learner can ask for a value of the unknown function at any point in the domain. \citet{Valiant:84} gave an efficient MQ learning algorithm for MDNFs of polynomial size. In a celebrated result, \citet{Jackson:97} gave a polynomial time MQ learning algorithm for DNFs over product distributions. Jackson's algorithm uses the Fourier transform-based learning technique \citep{LinialMN:93} and combines the Kushilevitz-Mansour algorithm for finding a ``heavy" Fourier coefficient of a boolean function \citep{GoldreichLevin:89,KushilevitzMansour:93} with the Boosting-by-Majority algorithm of \citet{Freund:95}. A similar approach was used in the subsequent improvements to Jackson's algorithm \citep{KlivansServedio:03,BJT:04,Feldman:07jmlr}.

The access to membership queries is clearly a very strong assumption and is unrealistic in most learning applications. Several works give DNF learning algorithms which relax this requirement: the learning algorithm of \citet{BshoutyFeldman:02} uses random examples from product distributions chosen by the algorithm and the algorithm of \citet{BshoutyMOS:05} uses only examples produced by a random walk on the hypercube. Another approach is to relax the requirement that the PAC algorithm succeeds on all polynomial-size DNF formulae and require it to succeed on a randomly chosen expression generated from some simple distribution over the formulae \citep{AizensteinPitt:95}. Strong results of this form were achieved recently by \citet{JacksonLSW:11} and \citet{Sellie:09}.

A new way to avoid the worst-case hardness of learning DNF was recently proposed by \citet{KalaiST:09}. Their model is inspired by the seminal model of smoothed analysis introduced in the context of optimization and numerical analysis by \citet{SpielmanTeng:04}. Smoothed analysis is based on the insight that, in practice, real-valued inputs or parameters of the problem are a result of noisy and imprecise measurements. Therefore the complexity of a problem is measured not on the worst-case values but on a random perturbation of those values. In the work of \citet{KalaiST:09} the perturbed parameters are the expectations of each of the coordinates of a product distribution over $\pmi^n$. In a surprising result they showed that DNF formulae are learnable efficiently in this model (and that decision trees are even learnable agnostically).

A crucial and the most involved component of the DNF learning algorithm of \citet{KalaiST:09} is the algorithm that -- given all ``heavy" (here this refers to those of inverse-polynomial magnitude), low-degree (logarithmic in the learning parameters) Fourier coefficients of the target DNF $f$ to inverse-polynomial accuracy -- finds a function that is $\eps$-close to $f$. Such an algorithm is necessary since, in the boosting-based approach of \citet{Jackson:97}, the weak learner needs to learn with respect to distributions which depend on previous weak hypotheses. When learning over a smoothed product distribution, the first weak hypothesis depends on the specific perturbation and therefore in the subsequent boosting stages, the parameters of the product distribution can no longer be thought of as perturbed randomly. \citet{KalaiST:09} show that this is not only a matter of complications in the analysis but an actual limitation of the boosting-based approach. Therefore they used an algorithm that first collects all the ``heavy" low-degree Fourier coefficients and then relies solely on this information to approximate the target function.

\subsection{Our Results}
We describe a new approach to the problem of learning a polynomial threshold function (PTF) from approximations of its ``heavy" low-degree Fourier coefficients, a problem we believe is interesting in its own right. The approach exploits a generalization of a simple structural result about any $s$-term DNF $f$: for every function $g:\pmn \rightarrow \pmr$, the error of $g$ on $f$ (measured as $\E_\U[|f(x)-g(x)|]$) is at most $\gamma \cdot (2s+1)$, where $\gamma$ is the magnitude of the largest difference between two corresponding Fourier coefficients of $f$ and $g$ \citep{KalaiST:09}. We use $\hat{f}$ to denote the vector of Fourier coefficients of $f$ and so this difference can be expressed as $\|\hat{f} - \hat{g}\|_\infty$. Hence to find a function $\eps$-close to $f$ it is sufficient to find a function $g$ such that $\|\hat{f} - \hat{g}\|_\infty \leq \eps/(2s+1)$, in other words, $g$ that has approximately (in the infinity norm) the same Fourier spectrum as $f$. We give a new, simple algorithm (Th.~\ref{th:ptf-reconstruct}) that constructs a function (with range in $[-1,1]$) which has approximately the desired Fourier spectrum.

Our algorithm builds $g$ in a fairly straightforward way: starting with a constant $g_0\equiv 0$ function we iteratively correct each coefficient to the desired value (by adding the difference in the coefficients multiplied by the corresponding basis function). After each such step the new function $g_t$ might have values outside of $\pmr$. We correct this by ``cutting-off" values outside of $\pmr$ (in other words, project them to $\pmr$). A simple argument shows that both of these operations reduce $\|f-g_t\|_2^2 = \E_\U[f(x)-g_t(x))^2]$. The coefficient correction procedure reduces this squared distance measure significantly and implies the convergence of the algorithm. In addition, through a slightly more complicated potential argument we show that there is  no need to perform the projection after each coefficient update; a single projection after all updates suffices (Th.~\ref{th:ptf-reconstruct-proper}). This implies that the function we construct via this algorithm is itself a polynomial threshold function (PTF).

To generalize our approach to product distributions, we strengthen the structural lemma about DNF expressions to measure the error in terms of the largest difference between corresponding low-degree Fourier coefficients and extend it to product distributions (Th.~\ref{th:dnf-fourier-approx-bound}). The algorithm itself uses the Fourier basis for the given product distribution but otherwise remains essentially unchanged. We also give a more general condition on PTFs that is sufficient for bounding $\E_\U[|f(x)-g(x)|]$ in terms of  largest difference between corresponding low-degree Fourier coefficients of $f$ and $g$. The general condition implies that our algorithm can also be used to learn any integer-weight linear threshold of terms as long as the sum of the magnitudes of weights (or the {\em total weight}) is polynomial.

We give several applications of our approach. The most immediate one is to obtain a simple algorithm for learning DNF expressions over product distributions with membership queries (Cor.~\ref{cor:learn-dnf-prod}). Given access to membership queries, the Fourier spectrum of any function can be approximated using the well-known Kushilevitz-Mansour algorithm and its generalization to product distributions \citep{GoldreichLevin:89,KushilevitzMansour:93}. We can then apply our approximation algorithm to get a hypothesis which is $\eps$ close to the target function. While technically our iterative algorithm is similar to boosting, the resulting algorithm for learning DNF is simpler and more self-contained than previous boosting-based algorithms.

The second application of our approximation algorithm and the motivation for this work is its use in the context of smoothed analysis of learning DNF over product distributions (Th.~\ref{th:smooth-learn-dnf}) where the problem was originally formulated and solved by \citet{KalaiST:09}. The approximation algorithm of \citet{KalaiST:09} is based on an elaborate combination of the {\em positive-reliable} DNF learning algorithm of \citet{KalaiKM:09} and the agnostic learning algorithm for decisions trees of \citet{GopalanKK:08}. In contrast, our algorithm gives a natural solution to the problem which is significantly simpler technically and is more general. We also note that the algorithm of \citet{KalaiST:09} does not construct a function with Fourier transform close to that of $f$ and is not based on the structural results we use.

In another application of our approach we give a new algorithm for learning MDNF expressions over product distributions. Our algorithm is based on Servedio's algorithm for learning MDNFs \citep{Servedio:04mondnf}. The main idea of his algorithm is to restrict the target function to influential variables, those that can change the value of the target function with significant probability. For any monotone function, influential variables can be easily identified. Then all the Fourier coefficients of low degree and restricted to influential variables are estimated individually from random examples. The sign of the resulting low-degree polynomial is used as a hypothesis. The degree for which such an approximation method is known to work is $20 \cdot \log{(s/\eps)} \cdot \log{(1/\eps)})$  \citep{Mansour:95}. Using our simple structural result about DNF and our algorithm for constructing a function with desired Fourier coefficients, we show  (Th.~\ref{th:learn-mdnf}) that to achieve $\eps$-accuracy coefficients of degree at most $O(\log{(s/\eps)})$ are sufficient. This results in $\poly((s \cdot \log{(s/\eps)})^{\log{(s/\eps)}}, n)$ time algorithm improving on $\poly((s \cdot \log{(ns/\eps)})^{\log{(s/\eps)} \cdot \log{(1/\eps)}}, n)$ bound of \citet{Servedio:04mondnf}.


\noindent {\bf Related work.} A closely related problem of finding a function with specified correlations with a given set of functions was considered by \citet{TrevisanTV:09} and their solution is based on a similar algorithm (with a more involved analysis). Our setting differs in that the set of functions with which correlations are specified has a superpolynomial size and the functions are not necessarily boolean (when the distribution is non-uniform).

In the Chow Parameter problem the goal is to find an approximation to a linear threshold function (LTF) $f$ from its degree-$1$ and degree-$0$ Fourier coefficients (the Chow parameters). \citet{ODonnellServedio:11} gave the first algorithm for the problem which is based on finding a function whose Chow parameters are close in Euclidean distance to those of $f$ (as opposed to $\|\cdot\|_\infty$ distance in our problem). Then they used an intricate structural result about LTFs to derive an approximation bound. Their algorithm is based on a brute-force search of some of the Chow parameters. A very recent, doubly exponential improvement to the solution of the problem was obtained using a new, stronger structural result and a new algorithm for constructing a linear threshold function from approximations of Chow parameters \citep{DeDFS:12}. As in our applications, the algorithm of \citet{DeDFS:12} constructs a bounded function with the given degree-1 Fourier spectrum. However the update step of their algorithm is optimized for minimizing the Euclidean distance of the Chow parameters of the obtained function to the given ones.

\noindent {\bf Organization.}
Structural results required for approximating DNF expressions and PTFs are given in Section \ref{sec:structural}. In Section \ref{sec:construct} we describe our main algorithm for constructing a function with the desired Fourier spectrum. In Section \ref{sec:applications} we give applications of our approach.

\section{Preliminaries}
\label{sec:prelims}
 For an integer $k$, let $[k]$ denote the set $\{1,2,\ldots,k\}$. For a vector $v \in \R^k$, we use the following notation for several standard quantities: $\|v\|_0 = |\{i \in [k] \cond v_i \neq 0\}|$, $\|v\|_1 = \sum_{i\in [k]}|v_i|$, $\|v\|_\infty = \max_{i\in [k]}\{|v_i|\}$ and $\|v\|_2 = \sqrt{\sum_{i\in [k]}v_i^2}$.
 For a real value $\alpha$, we denote its projection to $[-1,1]$ by $P_1(\alpha)$. That is, $P_1(\alpha) = \alpha$ if $|\alpha| \leq 1$ and $P_1(\alpha) = \sgn(\alpha)$, otherwise.

We refer to real-valued functions with range in $[-1,1]$ as {\em bounded}.
Let $B_d = \{a \in \zon \cond \|a\|_0 \leq d\}$. For $a \in \zon$ let $\chi_a(x)$ denote the function $\prod_{a_i = 1} x_i$. It is a monomial and also a parity function over variables with indices in $\{ i \leq n \cond a_i =1\}$. A degree-$d$ polynomial threshold function is a function representable as $\sgn(\sum_{a \in B_d} w(a) \chi_a(x))$ for some vector of weights $w \in \R^{B_d}$.
When the representing vector $w$ is sparse we can describe it by listing all the non-zero coefficients only. We refer to this as being {\em succinctly represented}.

\noindent {\bf PAC learning. }
Our learning model is Valiant's \citeyearpar{Valiant:84} well-known PAC model. In this model, for a concept $f$ and distribution $D$ over $\pmi^n$, an {\em example oracle} $\exoracle{D}{f}$ is an oracle that, upon request, returns an example $(x,f(x))$
where $x$ is chosen randomly with respect to $D$, independently of any
previous examples. A {\em membership query} (MQ) learning algorithm is an algorithm that has oracle access to the target function $f$ in addition to $\exoracle{D}{f}$, namely it can, for every point $x \in \pmi^n$ obtain the value $f(x)$. For $\eps \geq 0$, we say that function $g$ is
$\eps$-close to function $f$ relative to distribution $D$
if $\pr_D[f(x)=g(x)]\geq 1-\eps$. For a concept class $C$, we say that an algorithm $\A$
{\em efficiently} learns $C$ over distribution $D$, if for every $\eps > 0$,
 $n$, $f \in C$, $\A$ outputs, with probability at least $1/2$ and in time polynomial in $n/\eps$, a hypothesis $h$ that is $\eps$-close to $f$ relative to $D$. Learning of DNF expressions is commonly parameterized by the size $s$ (\ie the number of terms) of the smallest-size DNF representation of $f$. In this case the running time of the efficient learning algorithm is also allowed to depend polynomially on $s$. For $k \in [n]$ an $s$-term $k$-DNF expression is a DNF expression with $s$ terms of length at most $k$.

\noindent {\bf Fourier transform. } A number of methods for learning over the uniform distribution $\U$ are based on the Fourier transform technique. The technique relies on the fact that the set of all parity functions $\{\chi_a(x)\}_{a \in \zon}$ forms an orthonormal basis of the linear space of real-valued function over $\pmi^n$ with inner product defined as $\la f,g\ra_\U = \E_\U[f(x)g(x)]$. This fact implies that any real-valued  function $f$ over $\pmi^n$ can be uniquely represented as a linear combination of parities, that is $f(x) = \sum_{a \in \zo^n} \hat{f}(a) \chi_a(x)$. The coefficient $\hat{f}(a)$ is called Fourier coefficient of $f$ on $a$ and equals $\E_{\U}[f(x)\chi_a(x)]$;  $\|a\|_0$ is called the {\em degree} of $\hat{f}(a)$.
For a set $S \subseteq \zon$ we use $\hat{f}(S)$ to denote the vector of all coefficients with indices in $S$ and $\hat{f}$ to denote the vector of all the Fourier coefficients of $f$. The vector of all degree-$(\leq d)$ Fourier coefficients of $f$ can then be expressed as $\hat{f}(B_d)$. We also use a similar notation for vectors of estimates of Fourier coefficients. Namely, for $S \subseteq \zon$ we use $\tilde{f}(S)$ to denote a vector in $\R^S$ indexed by vectors in $S$.
We denote by $\tilde{f}(a)$ the $a$-th element of $\tilde{f}(S)$.  Whenever appropriate, we use succinct representations for vectors of Fourier coefficients (\ie listing only the non-zero coefficients).

We will make use of Parseval's identity which states that for every real-valued function $f$ over $\pmi^n$, $\E_\U[f^2] = \sum_a \hat{f}(a)^2 = \|\hat{f}\|_2^2$. Given oracle access to a function $f$ (\ie membership queries), the Fourier transform of a function can be approximated using the KM algorithm \citep{GoldreichLevin:89,KushilevitzMansour:93}
\begin{theorem}[KM algorithm]
\label{th:km}
There exists an algorithm that for any real-valued function $f :\pmi^n \rightarrow [-1,1]$, given parameters $\theta>0$, $\delta>0$ and oracle access to $f$, with probability at least $1-\delta$, returns a succinctly represented vector $\tilde{f}$, such that $\|\hat{f} - \tilde{f}\|_\infty \leq \theta$ and $\|\tilde{f}\|_0 \leq 4/\theta^2$. The algorithm runs in $\tilde{O}(n^2 \cdot \theta^{-6} \cdot \logd)$ time and makes $\tilde{O}(n \cdot \theta^{-6} \cdot \logd)$ queries to $f$.
\end{theorem}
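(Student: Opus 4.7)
The plan is to implement the Goldreich-Levin / Kushilevitz-Mansour bucketing algorithm. For each prefix $a\in\zo^k$ with $0\le k\le n$, define $B_a = \{a'\in\zon : a'_1\cdots a'_k = a\}$ and the bucket weight
\[
W_a = \sum_{a'\in B_a}\hat f(a')^2.
\]
Parseval together with $|f|\le 1$ gives $\sum_{a\in\zo^k} W_a = \E_\U[f^2]\le 1$, so at any single level at most $2/\theta^2$ prefixes satisfy $W_a\ge\theta^2/2$.

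The first key step is to show that $W_a$ is estimable from membership queries. Writing $x=(y,z)\in\pmi^k\times\pmi^{n-k}$ and expanding the Fourier coefficients indexed by $B_a$ by partial averaging over $z$, one obtains the standard identity
\[
W_a = \E_{y,y'}\!\bigl[\chi_a(y)\chi_a(y')\cdot\E_{z}[f(y,z)\,f(y',z)]\bigr],
\]
where $y,y'$ are independent uniform in $\pmi^k$ and $z$ is uniform in $\pmi^{n-k}$. Since the integrand lies in $[-1,1]$, a sample mean over $\tilde O(\theta^{-4}\log(1/\delta'))$ tuples $(y,y',z)$ estimates $W_a$ to additive error $\theta^2/8$ with probability $1-\delta'$ by Hoeffding's inequality. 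The algorithm now walks the binary prefix tree: at level $k$ it maintains a set of live prefixes, and for each such $a$ it estimates $W_{a0}$ and $W_{a1}$ and keeps those children whose estimate exceeds $3\theta^2/8$. The Parseval bound above caps the number of live prefixes per level at $O(\theta^{-2})$, so across all $n$ levels only $O(n\theta^{-2})$ weight estimations are performed. This yields $\tilde O(n\theta^{-6}\log(1/\delta))$ total queries and $\tilde O(n^2\theta^{-6}\log(1/\delta))$ running time, the extra factor of $n$ coming from evaluating $\chi_a$ on each sample. At level $n$ every surviving prefix is a singleton coefficient, and a final averaging step outputs the estimate $\tilde f(a)$ to additive accuracy $\theta$.

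For correctness, every $a$ with $|\hat f(a)|\ge\theta$ survives all pruning steps, since $W_{a_{\le k}}\ge\hat f(a)^2\ge\theta^2$ at every prefix length $k$; conversely, every surviving leaf has true $W_a\ge\theta^2/4$, so Parseval bounds the number of outputs by $4/\theta^2$, matching the $\|\tilde f\|_0$ claim. The most delicate part of the argument, and the place I would spend the most care, is the confidence accounting: the failure probability per estimation must be set to $\delta'=\Theta(\delta\theta^2/n)$ so that a union bound over all estimations yields overall failure probability $\delta$, and the thresholds $3\theta^2/8$ and $\theta^2/2$ must be coordinated with the estimation error $\theta^2/8$ so that every truly heavy bucket survives while the live set at each level stays bounded. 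The $\log$ factors introduced by this bookkeeping are exactly those absorbed in the $\tilde O(\cdot)$ notation.
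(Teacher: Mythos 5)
The paper states Theorem~\ref{th:km} without proof, citing \citet{GoldreichLevin:89} and \citet{KushilevitzMansour:93}; there is no in-paper argument to compare against. Your reconstruction is the standard Kushilevitz--Mansour/Goldreich--Levin bucketing proof, and it is correct. The estimator identity
\[
W_a \;=\; \E_{y,y',z}\bigl[\chi_a(y)\chi_a(y')\,f(y,z)\,f(y',z)\bigr]
\]
is right (it follows from $\E_z[\hat f_z(a)^2]=\sum_b\hat f(a,b)^2$ after writing $\hat f_z(a)^2=\E_{y,y'}[\chi_a(y)\chi_a(y')f_z(y)f_z(y')]$), the integrand is bounded by $1$ so Hoeffding gives $\tilde O(\theta^{-4}\log(1/\delta'))$ samples per estimation, and the threshold $3\theta^2/8$ paired with estimation error $\theta^2/8$ correctly guarantees that (i) any coefficient with $|\hat f(a)|\ge\theta$ survives every level, and (ii) every surviving node has true bucket weight at least $\theta^2/4$, which by Parseval caps both the live set per level and the final output support at $4/\theta^2$. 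The query count $\tilde O(n\,\theta^{-6}\log(1/\delta))$ (at most $n$ levels, $O(\theta^{-2})$ live nodes per level, $\tilde O(\theta^{-4})$ samples each) and the extra factor of $n$ in the running time for evaluating $\chi_a$ on each sample match the theorem's claims. You also correctly flag the delicate point --- setting $\delta'=\Theta(\delta\theta^2/n)$ for the union bound, which is where the $\tilde O(\cdot)$ absorbs the logarithmic overhead --- so there is no gap.
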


\noindent {\bf Product distributions.}
We consider learning over product distributions on $\pmi^n$. For a vector $\mu \in (-1,1)^n$ let $D_\mu$ denote the product distribution over $\pmi^n$ such that $\E_{x \sim D_\mu}[x_i] = \mu_i$ for  every $i \in [n]$. For each $i\in[n]$, $x_i = 1$ with probability $(1+\mu_i)/2$. For $c \in (0,1]$ the distribution $D_\mu$ is said to be $c$-bounded if $\mu \in [-1+c,1-c]^n$.  The uniform distribution is then equivalent to $D_{\bar{0}}$, where $\bar{0}$ is the all-zero vector, and is $1$-bounded. We use $\E_\mu[\cdot]$ to denote $\E_{x \sim D_\mu}[\cdot]$ and $\E[\cdot]$ to denote $\E_{x \sim \U}[\cdot]$ and similarly for $\pr$.

The Fourier transform technique extends naturally to product distributions \citep{FurstJS:91}. For $\mu \in (-1,1)^n$ the inner product is defined as
 $\la f,g\ra_\mu = \E_\mu[f(x)g(x)]$. The corresponding orthonormal basis of functions over $D_\mu$ is given by the set of functions $\{\phi_{\mu,a} \cond a \in \zon\}$, where $\phi_{\mu,a}(x) = \prod_{a_i=1} \frac{x_i-\mu_i}{\sqrt{1-\mu_i^2}}.$
Every function $f: \pmi^n \rightarrow \R$ can be uniquely represented as
$f(x) = \sum_{a \in \zo^n} \hat{f}_\mu(a) \phi_{\mu,a}(x)$, where the $\mu$-Fourier coefficient $\hat{f}_\mu(a)$ equals $\E_\mu[f(x)\phi_{\mu,a}(x)]$. We extend our uniform-distribution notation for vectors of Fourier coefficients to product distributions analogously. For any product distribution $\mu$, a degree-$d$ polynomial $p(x)$ has no non-zero $\mu$-Fourier coefficients of degree greater than $d$.

The KM algorithm has been extended to product distributions by \citet{Bellare:91}  \citep[see also][]{Jackson:97}. Below we describe a more efficient version given by \citet{KalaiST:09} (referred to as the EKM algorithm) which is efficient for all product distributions.
\begin{theorem}[EKM algorithm]
\label{th:ekm}
 There exists an algorithm that for any real-valued function $f :\pmi^n \rightarrow [-1,1]$, given parameters $\theta>0$, $\delta>0$, $\mu \in (-1,1)^n$, and oracle access to $f$, with probability at least $1-\delta$, returns a succinctly represented vector $\tilde{f}_\mu$, such that $\|\hat{f}_\mu - \tilde{f}_\mu\|_\infty \leq \theta$ and $\|\tilde{f}_\mu\|_0 \leq 4/\theta^2$. The algorithm runs in time polynomial in $n$, $1/\theta$ and $\logd$.
\end{theorem}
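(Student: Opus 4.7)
The approach is to extend the Kushilevitz--Mansour tree recursion of Theorem~\ref{th:km} from the parity basis to the $\mu$-biased basis $\{\phi_{\mu,a}\}$, overcoming the unboundedness of the basis functions by replacing Hoeffding with second-moment concentration. For a prefix $a_1\in\zo^k$ write $x=(y,z)\in\pmi^k\times\pmi^{n-k}$, let $\mu^{(1)},\mu^{(2)}$ denote the corresponding restrictions of $\mu$, and define the bucket mass
\[
W_{a_1} \;=\; \sum_{a_2\in\zo^{n-k}}\hat f_\mu(a_1 a_2)^2.
\]
Since $\phi_{\mu,a_1 a_2}(x)=\phi_{\mu^{(1)},a_1}(y)\,\phi_{\mu^{(2)},a_2}(z)$ and $\{\phi_{\mu^{(2)},a_2}\}$ is orthonormal under $D_{\mu^{(2)}}$, Parseval applied on the last $n-k$ coordinates yields
\[
W_{a_1} \;=\; \E_{z\sim D_{\mu^{(2)}}}\bigl[\,g_{a_1}(z)^2\,\bigr],\qquad g_{a_1}(z) \;:=\; \E_{y\sim D_{\mu^{(1)}}}\bigl[\,f(y,z)\,\phi_{\mu^{(1)},a_1}(y)\,\bigr].
\]
The algorithm grows a tree whose nodes at depth $k$ are those $a_1$ with $W_{a_1}\ge\theta^2/2$, expanding each into its two children $a_10,a_11$; the depth-$n$ survivors are the candidate heavy indices, and the output vector $\tilde f_\mu$ is supported on these candidates.

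Each bucket $W_{a_1}$ is estimated from the unbiased statistic $f(y_1,z)\,f(y_2,z)\,\phi_{\mu^{(1)},a_1}(y_1)\,\phi_{\mu^{(1)},a_1}(y_2)$ with independent $y_1,y_2\sim D_{\mu^{(1)}}$ and $z\sim D_{\mu^{(2)}}$, and a leaf coefficient $\hat f_\mu(a)$ from the unbiased $f(x)\,\phi_{\mu,a}(x)$ with $x\sim D_\mu$. The main technical point, and the step I expect to be the principal obstacle, is that $|\phi_{\mu,a}|$ can be unboundedly large as some $\mu_i$ approaches $\pm 1$, so Hoeffding-type concentration is unavailable, which is exactly where the earlier Bellare--Jackson version required $c$-boundedness. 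Orthonormality still forces $\E_\mu[\phi_{\mu,a}^2]=1$, and combined with $|f|\le 1$ this bounds the second moment of both estimators by $1$ uniformly in $\mu$. Chebyshev together with a median-of-means amplification then estimates $W_{a_1}$ to within $\theta^2/4$ (respectively $\hat f_\mu(a)$ to within $\theta/2$) with failure probability $\delta'$ using $\poly(1/\theta,\log(1/\delta'))$ samples from $D_\mu$ and evaluations of $f$.

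By Parseval $\sum_{a_1}W_{a_1}=\E_\mu[f^2]\le 1$, so at most $2/\theta^2$ nodes survive at each of the $n$ levels and the recursion tree has at most $2n/\theta^2$ nodes overall. Setting $\delta'=\delta\theta^2/(8n)$ and union-bounding makes every bucket and coefficient estimate accurate simultaneously with probability $1-\delta$. Any true coefficient with $|\hat f_\mu(a)|>\theta$ contributes at least $\theta^2$ to every ancestor bucket and is therefore retained along the entire root-to-leaf path, while any surviving leaf has $\hat f_\mu(a)^2=W_a\ge\theta^2/4$ (after thresholds are chosen so that buckets above $\theta^2$ pass and buckets below $\theta^2/2$ fail). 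This yields both $\|\hat f_\mu-\tilde f_\mu\|_\infty\le\theta$ and, by $\sum_{a\in T}\hat f_\mu(a)^2\le 1$ with each term at least $\theta^2/4$, the sparsity bound $\|\tilde f_\mu\|_0\le 4/\theta^2$. The total running time is $\poly(n,1/\theta,\log(1/\delta))$, matching the claim.
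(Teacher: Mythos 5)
The paper does not prove this theorem: it states it as a known result from Kalai, Samorodnitsky, and Teng (2009), with the earlier Bellare (1991) and Jackson (1997) extensions of Kushilevitz--Mansour to product distributions mentioned as its antecedents. There is therefore no internal proof to compare yours against, so I evaluate it on its own merits.

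Your argument is correct and is the natural (and, I believe, essentially the original) route. The skeleton is the standard Goldreich--Levin / Kushilevitz--Mansour prefix-tree recursion transported to the $\mu$-biased basis. The factorization $\phi_{\mu,a_1a_2}(y,z)=\phi_{\mu^{(1)},a_1}(y)\,\phi_{\mu^{(2)},a_2}(z)$ and the Parseval identity $W_{a_1}=\E_{z}[g_{a_1}(z)^2]$ both hold exactly as you state, and the unbiasedness of the two-sample product estimator for $W_{a_1}$ follows by conditioning on $z$. You also correctly pinpoint the one real obstacle: $|\phi_{\mu,a}|$ is unbounded as some $\mu_i\to\pm1$, so Hoeffding is unavailable and any bound that routes through $\|\phi_{\mu,a}\|_\infty$ would reintroduce a dependence on a boundedness parameter $c$. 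Your fix is exactly right: orthonormality gives $\E_\mu[\phi_{\mu,a}^2]=1$, so with $|f|\le 1$ and the independence of $y_1,y_2$ the bucket estimator has second moment at most $1$ uniformly in $\mu$ (and likewise for the leaf estimator $f(x)\phi_{\mu,a}(x)$), after which Chebyshev plus median-of-means gives accuracy $\theta^2/4$ (resp.\ $\theta/2$) with failure probability $\delta'$ from $\poly(1/\theta,\log(1/\delta'))$ samples. The survival threshold at $\theta^2/2$, the counts of at most $2/\theta^2$ live prefixes per level and $4/\theta^2$ surviving leaves, the union bound with $\delta'=\delta\theta^2/(8n)$, and the resulting $\|\hat f_\mu-\tilde f_\mu\|_\infty\le\theta$ and $\|\tilde f_\mu\|_0\le 4/\theta^2$ all check out. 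One point worth making explicit for completeness: the algorithm is given $\mu$ as input, so it can sample from $D_{\mu^{(1)}}$, $D_{\mu^{(2)}}$ and evaluate $\phi_{\mu^{(1)},a_1}$ in $O(n)$ time per sample; combined with oracle access to $f$, this yields the claimed $\poly(n,1/\theta,\log(1/\delta))$ running time.
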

When learning relative to distribution $D_\mu$ we can assume that $\mu$ is known to the learning algorithm. For our purposes a sufficiently-close approximation to $\mu$ can always be obtained by estimating $\mu_i$ for each $i$ using random samples from $D_\mu$.

Without oracle access to $f$, but given examples of $f$ on points drawn randomly from $D_\mu$ one can estimate the Fourier coefficients up to degree $d$ by estimating each coefficient individually in a straightforward way (that is, by using the empirical estimates). A na\"{i}ve way of analyzing the number of samples required to achieve certain accuracy requires a number of samples that depends on $\mu$ and the degree of the estimated coefficient (since $|\phi_{\mu,a}(x)|$ depends on them). \citet{KalaiST:09} gave a more refined analysis which eliminates the dependence on $d$ and $\mu$ and implies the following theorem.
\begin{theorem}[Low Degree Algorithm]
\label{th:low-degree}
There exists an algorithm that for any real-valued function $f :\pmi^n \rightarrow [-1,1]$ and $\mu \in (-1,1)^n$, given parameters $d \in [n]$, $\theta>0$, $\delta>0$, and access to $\exoracle{D_\mu}{f}$, with probability at least $1-\delta$, returns a succinctly-represented vector $\tilde{f}_\mu$, such that $\|\hat{f}_\mu(B_d) - \tilde{f}_\mu(B_d)\|_\infty \leq \theta$ and $\|\tilde{f}_\mu\|_0 \leq 4/\theta^2$. The algorithm runs in time $n^{d} \cdot \poly(n \cdot \theta^{-1} \cdot \logd)$.
\end{theorem}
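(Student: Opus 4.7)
The plan is to estimate each $\mu$-Fourier coefficient $\hat{f}_\mu(a)$ of degree at most $d$ by an empirical average over samples from $\exoracle{D_\mu}{f}$, using a second-moment argument to kill the dependence on $\mu$ and $d$, and then to prune the resulting vector using Parseval to meet the sparsity bound.

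For a fixed $a \in B_d$, set $Y_a(x) := f(x)\phi_{\mu,a}(x)$, so that $\E_\mu[Y_a] = \hat{f}_\mu(a)$. The key observation is the variance bound
\[
\Var_\mu[Y_a] \le \E_\mu[Y_a^2] = \E_\mu[f(x)^2\phi_{\mu,a}(x)^2] \le \E_\mu[\phi_{\mu,a}(x)^2] = 1,
\]
where the last equality is orthonormality of $\{\phi_{\mu,a}\}$ under $D_\mu$. Crucially, this bound is independent of $\mu$, $d$, and $a$, even though $|\phi_{\mu,a}(x)|$ itself can be as large as $((1+c)/\sqrt{1-(1-c)^2})^d$ for a $c$-bounded distribution, which is exactly what dooms a naive Hoeffding analysis.

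Given this variance bound, Chebyshev's inequality implies that averaging $Y_a$ over $m_0 = O(1/\theta^2)$ independent samples yields an estimate within $\theta/4$ of $\hat{f}_\mu(a)$ with probability at least, say, $3/4$. Boosting the confidence via the standard median-of-means trick, i.e.\ taking the median of $k = O(\log(|B_d|/\delta)) = O(d \log(n/\delta))$ independent copies of such an average, produces $\tilde{f}_\mu(a)$ satisfying $|\tilde{f}_\mu(a) - \hat{f}_\mu(a)| \le \theta/4$ with probability at least $1 - \delta/|B_d|$. A union bound over all $a \in B_d$ (there are $|B_d| \le (n+1)^d$ such indices) gives $\|\hat{f}_\mu(B_d) - \tilde{f}_\mu(B_d)\|_\infty \le \theta/4$ with probability at least $1-\delta$, using $m = O((d \log(n/\delta))/\theta^2)$ samples in total and $n^d \cdot \poly(n, 1/\theta, \log(1/\delta))$ time for the arithmetic (each $\phi_{\mu,a}(x^{(i)})$ evaluates as a product of at most $d$ reals).

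To meet the sparsity requirement, set to zero every coordinate $\tilde{f}_\mu(a)$ whose magnitude is below $3\theta/4$, and call the resulting vector $\tilde{f}_\mu$. Any coefficient killed in this pruning step has $|\hat{f}_\mu(a)| \le \theta$, while any surviving coefficient satisfies $|\hat{f}_\mu(a)| \ge \theta/2$; Parseval's identity $\sum_a \hat{f}_\mu(a)^2 = \E_\mu[f^2] \le 1$ then forces $\|\tilde{f}_\mu\|_0 \le 4/\theta^2$. The final error still satisfies $\|\hat{f}_\mu(B_d) - \tilde{f}_\mu(B_d)\|_\infty \le \theta$, as required. The only subtle step is the variance bound — everything after it (Chebyshev, median-of-means, union bound, Parseval sparsification) is a routine packaging that would not have worked had the variance grown with $\mu$ or $d$.
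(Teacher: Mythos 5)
Your proof is correct and follows precisely the route the paper alludes to: the paper does not give its own proof of this theorem but credits \citet{KalaiST:09} with the ``more refined analysis which eliminates the dependence on $d$ and $\mu$'' needed to get around the fact that $|\phi_{\mu,a}(x)|$ can be exponentially large. Your second-moment bound $\E_\mu[Y_a^2]\le\E_\mu[\phi_{\mu,a}^2]=1$ is exactly that refinement, and the subsequent Chebyshev/median-of-means/union-bound/Parseval-pruning packaging correctly delivers the accuracy, confidence, sparsity, and $n^d\cdot\poly(n,\theta^{-1},\log(1/\delta))$ running-time claims of the theorem.
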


\section{Structural Conditions for Approximation}
\label{sec:structural}
In this section we prove several connections relating the $L_1$ distance of a low-degree PTF $f$ to a bounded function $g$ (\ie $\E[|f(x)-g(x)|])$ and the maximum distance between the low-degree portions of the Fourier spectrum of $f$ and $g$ (\ie $\|\hat{f}(B_d)-\hat{g}(B_d)\|_\infty$). A special case of such a connection was proved by \citet{KalaiST:09}. Another special case, for linear threshold functions, was given by \citet{BirkendorfDJKS:98}. Our version yields strong bounds for every PTF $f(x)=\sgn(p(x))$ where polynomial $p(x)$ satisfies $|p(x)|\geq 1$ for all $x$ and $p(x)$ is close to a low-degree polynomial $p'(x)$ of small $\|\cdot\|_1$ norm. In particular, it applies to any function representable as an integer-weight low-degree PTF of polynomial total weight and to any integer-weight linear threshold of terms (ANDs) of polynomial total weight (which includes polynomial size DNF expressions). We start by defining two simple and known measures of complexity of a degree-$d$ PTF.

\begin{definition}
For $\lambda>0$, we say that a polynomial $p(x)$, {\em $\lambda$-sign-represents} a boolean function $f(x)$ if for all $x \in \pmn$, $f(x) = \sgn(p(x))$ and $|p(x)| \geq \lambda$. For a degree-$d$ PTF $f$, let $W_1^d(f)$ denote $$\min\{\|\hat{p}\|_1 \cond p \mbox{ 1-sign-represents } f\} .$$ The {\em degree-$d$ total integer weight} of $f$ is $$TW^d(f) =  \min\{\|\hat{p}\|_1 \cond \hat{p} \mbox{ is integer and } f = \sgn(p)\} .$$
\end{definition}

\begin{remark}
\label{rem:ptf-margin-to-weight}
We briefly remark that $W_1^d(f)$ is exactly the inverse of the {\em advantage} of a degree-$d$ PTF defined by \citet{KrausePudlak:97} as the largest $\lambda$ for which there exists a polynomial $p(x)$ such that $p$ $\lambda$-sign-represents $f$ and $\|\hat{p}\|_1 =1$). In addition, linear programming duality implies that the advantage of $f$ equals $\alpha$ if and only if $\alpha$ is the smallest value such that for every distribution $D$ over $\pmn$ there exists a monomial $\chi_a(x)$ of degree at most $d$ such that $|\E_D[f(x) \cdot \chi_a(x)]| \geq \alpha$ (see Nisan's proof in \citep{Impagliazzo:95}). Finally, clearly $W_1^d(f) \leq TW^d(f)$. The characterization of advantage using the LP duality together with the boosting algorithm by \citet{Freund:95} imply that $TW^d(f) = O(n \cdot W_1^d(f)^2)$.
\end{remark}

We first prove a simpler special case of our bound when the representing polynomial $p(x)$ and the approximating polynomial $p'(x)$ are the same.
\begin{lemma}
\label{lem:ptf-fourier-bound}
 Let $p(x)$ be a degree-$d$ polynomial that 1-sign-represents a PTF $f(x)$. For every $\mu \in (-1,1)^n$ and bounded function $g(x):\pmi^n\rightarrow [-1,1]$, $$\E_\mu[|f(x)-g(x)|] \leq \|\hat{f}_\mu(B_d) - \hat{g}_\mu(B_d)\|_\infty \cdot \|\hat{p}_\mu(B_d)\|_1 .$$
\end{lemma}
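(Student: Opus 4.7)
The plan is to derive the bound by establishing a pointwise comparison between $|f(x)-g(x)|$ and the product $p(x)(f(x)-g(x))$, and then expanding the resulting expectation in the $\mu$-Fourier basis. The key observation is that the $1$-sign-representation condition gives us exactly the margin needed to replace the sign function by the polynomial $p$ without loss.

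First I would verify the pointwise identity $|f(x)-g(x)| = f(x)(f(x)-g(x))$, which holds because $f(x) \in \{-1,1\}$ and $g(x)\in[-1,1]$: when $f(x)=1$ both sides equal $1-g(x)\ge 0$, and when $f(x)=-1$ both sides equal $1+g(x)\ge 0$. In particular, $f(x)(f(x)-g(x))\ge 0$ for every $x$. Next, since $p$ $1$-sign-represents $f$, we have $f(x)p(x)\ge 1$ everywhere. Multiplying the nonnegative quantity $f(x)(f(x)-g(x))$ by $f(x)p(x)\ge 1$ and using $f(x)^2=1$ gives
\[
|f(x)-g(x)| \;=\; f(x)(f(x)-g(x)) \;\le\; f(x)p(x)\cdot f(x)(f(x)-g(x)) \;=\; p(x)(f(x)-g(x)).
\]

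Taking expectations under $D_\mu$ yields $\E_\mu[|f(x)-g(x)|] \le \E_\mu[p(x)(f(x)-g(x))]$. Since $p$ has degree at most $d$, it lies in the span of $\{\phi_{\mu,a}\}_{a\in B_d}$, so $p(x)=\sum_{a\in B_d}\hat p_\mu(a)\phi_{\mu,a}(x)$. By linearity of expectation together with the definition of the $\mu$-Fourier coefficients of $f$ and $g$,
\[
\E_\mu[p(x)(f(x)-g(x))] \;=\; \sum_{a\in B_d} \hat p_\mu(a)\bigl(\hat f_\mu(a)-\hat g_\mu(a)\bigr).
\]
A one-line application of H\"older's inequality for $\ell_1$ against $\ell_\infty$ then gives the stated bound $\|\hat p_\mu(B_d)\|_1 \cdot \|\hat f_\mu(B_d)-\hat g_\mu(B_d)\|_\infty$.

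There is no real obstacle; the only subtlety is spotting the pointwise inequality $|f-g|\le p\cdot (f-g)$, which uses both the sign condition and the margin $|p(x)|\ge 1$ in an essential way. Everything else is a standard Fourier/duality computation, and the argument is visibly uniform in $\mu$, so it goes through for any product distribution without modification.
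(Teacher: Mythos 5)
Your proof is correct and follows essentially the same route as the paper's: the pointwise inequality $|f(x)-g(x)| = f(x)(f(x)-g(x)) \le p(x)(f(x)-g(x))$ using the margin $|p(x)|\ge 1$ and the common sign, followed by expansion of $p$ in the degree-$d$ $\mu$-Fourier basis and H\"older's inequality. The only difference is that you spell out the intermediate "multiply by $f(x)p(x)\ge 1$" step that the paper leaves implicit, which is a harmless bit of extra detail.
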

\begin{proof}
First note that for every $x$, the values $f(x), f(x)-g(x)$ and $p(x)$ have the same sign. Therefore
$\E_\mu[|f(x)-g(x)|] = \E_\mu[f(x)(f(x)-g(x))] \leq \E_\mu[p(x)(f(x)-g(x))]$.
From here we immediately get that
\alequn{\E_\mu[p(x)(f(x)-g(x))] &= \sum_{a \in B_d} \hat{p}_\mu(a)\E_\mu[(f(x)-g(x)) \phi_{\mu,a}(x)] = \sum_{a \in B_d} \hat{p}_\mu(a)(\hat{f}_\mu(a) - \hat{g}_\mu(a)) \\ &\leq \|\hat{f}_\mu(B_d) - \hat{g}_\mu(B_d)\|_\infty \cdot \|\hat{p}_\mu(B_d)\|_1\ .}
\end{proof}

To apply our bound to functions which are close (but not equal) to a degree-$d$ PTF we also give the following approximate version of Lemma \ref{lem:ptf-fourier-bound}.
\begin{lemma}
\label{lem:ptf-fourier-approx-bound}
Let $p(x)$ be a polynomial that 1-sign-represents a PTF $f(x)$ and let $p'(x)$ be any degree-$d$ polynomial. For every $\mu \in (-1,1)^n$ and a bounded function $g(x):\pmi^n\rightarrow [-1,1]$, $$\E_\mu[|f(x)-g(x)|] \leq \|\hat{f}_\mu(B_d) - \hat{g}_\mu(B_d)\|_\infty \cdot \|\widehat{p'}_\mu(B_d)\|_1 + 2 \E_\mu[|p'(x) - p(x)|] .$$
\end{lemma}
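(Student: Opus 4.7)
The plan is to mimic the proof of Lemma~\ref{lem:ptf-fourier-bound}, but replace $p$ with $p'$ at the critical step and pay for the approximation error.

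As in the previous proof, I would first observe that for every $x$, the quantities $f(x)$, $f(x)-g(x)$, and $p(x)$ all share the same sign (since $f(x) \in \pmi$, $g(x) \in [-1,1]$, and $p(x)$ $1$-sign-represents $f$), so
\alequn{\E_\mu[|f(x)-g(x)|] = \E_\mu[f(x)(f(x)-g(x))] \leq \E_\mu[p(x)(f(x)-g(x))].}
Now split $p = p' + (p - p')$, giving
\alequn{\E_\mu[p(x)(f(x)-g(x))] = \E_\mu[p'(x)(f(x)-g(x))] + \E_\mu[(p(x)-p'(x))(f(x)-g(x))].}

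For the first summand, since $p'$ has degree at most $d$ its $\mu$-Fourier spectrum is supported on $B_d$, so expanding in the orthonormal basis $\{\phi_{\mu,a}\}$ yields exactly the bound from Lemma~\ref{lem:ptf-fourier-bound}:
\alequn{\E_\mu[p'(x)(f(x)-g(x))] = \sum_{a \in B_d} \widehat{p'}_\mu(a)\bigl(\hat{f}_\mu(a) - \hat{g}_\mu(a)\bigr) \leq \|\hat{f}_\mu(B_d) - \hat{g}_\mu(B_d)\|_\infty \cdot \|\widehat{p'}_\mu(B_d)\|_1.}
For the second summand, I would use the trivial pointwise bound $|f(x) - g(x)| \leq 2$ (valid since $f$ is $\pm 1$-valued and $g$ is bounded), which immediately gives $\E_\mu[(p-p')(f-g)] \leq 2\E_\mu[|p(x) - p'(x)|]$. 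Summing the two bounds yields the claim.

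There is no real obstacle here: the only thing to be careful about is that we no longer have $p$ itself inside the inner-product expansion (since $p$ need not be of degree $d$, or even a polynomial of bounded degree), which is precisely why $p'$ is introduced as a low-degree surrogate and why the factor of $2$ appears on the error term.
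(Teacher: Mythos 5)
Your proof is correct and follows exactly the same route as the paper: reduce to $\E_\mu[p(f-g)]$ using the shared sign, split $p = p' + (p-p')$, bound the $p'$ part via the Fourier expansion on $B_d$ as in Lemma~\ref{lem:ptf-fourier-bound}, and bound the remainder pointwise using $|f-g|\le 2$.
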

\begin{proof}
Following the proof of Lemma \ref{lem:ptf-fourier-approx-bound}, we get
\alequn{\E_\mu[|f(x)-g(x)|] & \leq \E_\mu[p(x)(f(x)-g(x))] \\ &= \E_\mu[p'(x)(f(x)-g(x))] + \E_\mu[(p(x)-p'(x))(f(x)-g(x))] \\ & \leq \|\hat{f}_\mu(B_d) - \hat{g}_\mu(B_d)\|_\infty \cdot \|\widehat{p'}_\mu(B_d)\|_1\  + \E_\mu[2|p'(x) - p(x)|].}
\end{proof}

We now give bounds on such representations of DNF expressions.  As a warm-up we start with the uniform distribution case which is implicit in \citep{KalaiST:09}.
\begin{lemma}
\label{lem:dnf-l1-bound}
For any $s$-term DNF $f$, $W_1^n(f) \leq 2s+1$.
\end{lemma}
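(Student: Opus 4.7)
My plan is to exhibit an explicit polynomial $p$ that $1$-sign-represents $f$ and whose Fourier $\ell_1$ norm is at most $2s+1$. Writing $f$ in its $\pm 1$ encoding with $f=1$ meaning ``some term is satisfied'', I would first handle each term individually. If $T_i$ is a conjunction of $k_i$ literals $\ell_{i,1},\dots,\ell_{i,k_i}$ (each literal being $\pm x_j$), then the $\{0,1\}$-valued indicator of $T_i$ is exactly
\[
t_i(x) \;=\; \prod_{j=1}^{k_i} \frac{1+\ell_{i,j}}{2}.
\]
Expanding this product gives $2^{k_i}$ monomials, each with coefficient of magnitude $2^{-k_i}$, so $\|\hat{t_i}\|_1 = 1$.

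Next I would combine these into the candidate polynomial
\[
p(x) \;=\; 2\sum_{i=1}^{s} t_i(x) \;-\; 1.
\]
When $f(x)=1$, at least one $t_i(x)=1$ and the others are nonnegative, so $p(x)\geq 2\cdot 1 - 1 = 1$; when $f(x)=-1$, every $t_i(x)=0$, so $p(x)=-1$. Hence $\sgn(p(x))=f(x)$ and $|p(x)|\geq 1$ on $\pmn$, i.e., $p$ is a $1$-sign-representation of $f$.

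Finally I would bound $\|\hat{p}\|_1$ using the triangle inequality on Fourier $\ell_1$ norms:
\[
\|\hat{p}\|_1 \;\leq\; 2\sum_{i=1}^{s}\|\hat{t_i}\|_1 \;+\; \|\widehat{1}\|_1 \;=\; 2s+1,
\]
which gives the claim. There is no real obstacle here; the one thing worth being careful about is the encoding convention (the DNF outputs $\pm 1$, while each $t_i$ is $\{0,1\}$-valued), and the choice of the affine shift $2\sum t_i - 1$ which is specifically tuned so that the ``false'' case lands exactly at $-1$ and the ``true'' case is at least $+1$, yielding the $1$-sign-representation property with minimal overhead in the $\ell_1$ norm.
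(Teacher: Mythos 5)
Your proof is correct and matches the paper's argument exactly: both use the $\{0,1\}$-valued term indicators $t_i$ with $\|\hat{t_i}\|_1 = 1$, form $p = 2\sum_i t_i - 1$, and bound $\|\hat{p}\|_1$ by the triangle inequality. The only minor difference is that you spell out why $\|\hat{t_i}\|_1 = 1$ while the paper cites it as well known.
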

\begin{proof}
Let $t_1(x),t_2(x),\ldots,t_s(x)$ denote the $\zo$ versions of each of the terms of $f$. For each $i\in [s]$ let $T_i$ denote the set of the indices of all the variables in the term $t_i$. Then,  $t_i = \prod_{j \in T_i}\frac{1\pm x_j}{2}$, where the sign of each variable $x_j$ is determined by whether it is negated or not in $t_i$. As is well-known \citep[\eg][]{BlumFJ+:94}, this implies that $\|\hat{t_i}\|_1 = 1$. Now, let $p(x) = 2 \sum_{i\in[s]} t_i(x) - 1$. It is easy to see that, $|p(x)| \geq 1$, $f(x) = \sgn(p(x))$, $p(x)$ and $$\|\hat{p}\|_1 \leq 2 \sum_{i\in[s]} \|\hat{t_i}\|_1 + 1 \leq 2s+1\ .$$
\end{proof}

An immediate corollary of Lemma \ref{lem:ptf-fourier-bound} and Lemma \ref{lem:dnf-l1-bound} is the following bound given by \citet{KalaiST:09}.
\begin{corollary}
\label{cor:uniform-dnf-fourier-approx-bound}
Let $f$ be an $s$-term DNF expression. For every bounded function $g(x)$, $\E[|f(x)-g(x)|] \leq (2s+1)\cdot \|\hat{f} - \hat{g}\|_\infty$.
\end{corollary}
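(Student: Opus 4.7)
The plan is to simply combine the two preceding lemmas in the obvious way, specializing to the uniform distribution and to the degree $d=n$ (so that $B_d=\zon$ covers the entire Fourier spectrum).

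First, I would invoke Lemma \ref{lem:dnf-l1-bound} to produce an explicit polynomial 1-sign-representing $f$. Concretely, setting $p(x) = 2\sum_{i \in [s]} t_i(x) - 1$ where the $t_i$ are the $\{0,1\}$-valued terms of $f$, the lemma gives both $|p(x)| \geq 1$ with $\sgn(p(x)) = f(x)$, and $\|\hat{p}\|_1 \leq 2s+1$. Since $p$ has degree at most $n$, it is a degree-$n$ polynomial that 1-sign-represents $f$.

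Next, I would apply Lemma \ref{lem:ptf-fourier-bound} with $\mu = \bar{0}$ (so $D_\mu = \U$, $\phi_{\mu,a} = \chi_a$, and $\hat{f}_\mu = \hat{f}$) and with $d = n$, using the polynomial $p$ from the previous step. The lemma then yields
\[
\E[|f(x)-g(x)|] \leq \|\hat{f}(B_n) - \hat{g}(B_n)\|_\infty \cdot \|\hat{p}(B_n)\|_1 .
\]
Since $B_n = \zon$, the vectors $\hat{f}(B_n)$ and $\hat{g}(B_n)$ are just $\hat{f}$ and $\hat{g}$, and $\|\hat{p}(B_n)\|_1 = \|\hat{p}\|_1 \leq 2s+1$. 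Substituting gives exactly the stated inequality $\E[|f(x)-g(x)|] \leq (2s+1)\cdot \|\hat{f} - \hat{g}\|_\infty$.

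There is no real obstacle here: the structural work has already been done in Lemmas \ref{lem:ptf-fourier-bound} and \ref{lem:dnf-l1-bound}, and the corollary is just a plug-in. The only thing to be slightly careful about is making sure the specialization $\mu = \bar{0}$, $d = n$ collapses the Fourier-vector restrictions in Lemma \ref{lem:ptf-fourier-bound} to the full-spectrum quantities in the corollary's statement, which is immediate from the definitions.
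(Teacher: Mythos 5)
Your proposal is correct and matches the paper's intent exactly: the paper presents the corollary as an immediate consequence of Lemmas \ref{lem:ptf-fourier-bound} and \ref{lem:dnf-l1-bound}, and your specialization to $\mu = \bar{0}$ and $d = n$ is precisely the intended plug-in.
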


As can be seen from of Lemma \ref{lem:dnf-l1-bound}, bounding $W_1^n(f)$ is based on bounding $\|\hat{t_i}\|_1$ for every term $t_i$ of a DNF expression. Therefore we next prove a product distribution bound on $\|\hat{t_i}\|_1$.
\begin{lemma}
\label{lem:term-l1-bound-prod}
Let $t(x)$ be a $\zo$ AND of $d$ boolean literals, that is, for a set of $d$ literals $T \subseteq \{x_1,\bar{x}_1,x_2,\bar{x}_2,\ldots,x_n,\bar{x}_n\}$, $t(x) = 1$ when all literals in $T$ are set to 1 in $x$ and 0 otherwise.
For any constant $c \in (0,1]$ and $\mu \in [-1+c,1-c]^n$, $$\|\hat{t}_\mu\|_1 = \|\hat{t}_\mu(B_d)\|_1 \leq (2-c)^{d/2} .$$
\end{lemma}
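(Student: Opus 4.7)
The plan is to compute the $\mu$-Fourier expansion of $t$ in closed form by decomposing $t$ as a product over its $d$ literals and expanding each factor in the single-variable $\mu$-basis. For each literal in $T$, let $b_i \in \{-1,+1\}$ record whether the literal is positive ($b_i=1$, requiring $x_i=1$) or negated ($b_i=-1$, requiring $x_i=-1$); then the $\zo$-valued AND factors as $t(x)=\prod_{i\in T}\frac{1+b_ix_i}{2}$. Using the identity $x_i=\mu_i+\sqrt{1-\mu_i^2}\,\phi_{\mu,\{i\}}(x)$, I rewrite each factor as $\alpha_i+\beta_i\,\phi_{\mu,\{i\}}(x)$ with $\alpha_i=\frac{1+b_i\mu_i}{2}$ and $\beta_i=\frac{b_i\sqrt{1-\mu_i^2}}{2}$.

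Next, because $\phi_{\mu,S}(x)=\prod_{i\in S}\phi_{\mu,\{i\}}(x)$ for $S\subseteq T$ and the variables in $T$ are distinct, distributing the product yields
\[
t(x)=\sum_{S\subseteq T}\Bigl(\prod_{i\in T\setminus S}\alpha_i\Bigr)\Bigl(\prod_{i\in S}\beta_i\Bigr)\phi_{\mu,S}(x).
\]
Orthonormality of the $\phi_{\mu,S}$ identifies the coefficients: $\hat{t}_\mu(S)=\prod_{i\in T\setminus S}\alpha_i\prod_{i\in S}\beta_i$ for $S\subseteq T$ and zero otherwise. In particular every nonzero $\mu$-Fourier coefficient of $t$ is supported on a subset of $T$, hence has degree at most $d$, which already gives the claimed equality $\|\hat{t}_\mu\|_1=\|\hat{t}_\mu(B_d)\|_1$. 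Taking absolute values and re-factoring the sum yields the clean product formula $\|\hat{t}_\mu\|_1=\prod_{i\in T}(|\alpha_i|+|\beta_i|)$.

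Finally, I bound each factor. By Cauchy--Schwarz, $(|\alpha_i|+|\beta_i|)^2\le 2(\alpha_i^2+\beta_i^2)$, and a direct computation shows $\alpha_i^2+\beta_i^2=\frac{(1+b_i\mu_i)^2+(1-\mu_i^2)}{4}=\frac{1+b_i\mu_i}{2}$, so $(|\alpha_i|+|\beta_i|)^2\le 1+b_i\mu_i$. The $c$-boundedness hypothesis $\mu_i\in[-1+c,1-c]$ gives $1+b_i\mu_i\le 2-c$, hence $|\alpha_i|+|\beta_i|\le\sqrt{2-c}$. Taking the product over $i\in T$ of size $d$ concludes $\|\hat{t}_\mu\|_1\le(2-c)^{d/2}$.

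I don't anticipate any real obstacle; the only thing to be careful about is the bookkeeping of signs $b_i$ in the single-variable expansion and verifying that the cross terms in $\alpha_i^2+\beta_i^2$ collapse so that Cauchy--Schwarz produces the clean upper bound $1+b_i\mu_i$ rather than something weaker. A sanity check is the uniform case $c=1$: there $\mu_i=0$, $\alpha_i=1/2$, $|\beta_i|=1/2$, each factor equals $1$, and the bound becomes $1^{d/2}=1$, matching $\|\hat{t}\|_1=1$ from Lemma~\ref{lem:dnf-l1-bound}.
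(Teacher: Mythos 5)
Your proof is correct, but it is organized differently from the paper's. The paper argues abstractly: since $t$'s $\mu$-Fourier support lies in subsets of $T$ (so at most $2^d$ nonzero coefficients), the global Cauchy--Schwarz bound $\|\hat{t}_\mu\|_1 \le 2^{d/2}\|\hat{t}_\mu\|_2$ applies, and Parseval gives $\|\hat{t}_\mu\|_2^2 = \E_\mu[t^2] = \pr_\mu[t=1] \le (1-c/2)^d$, which multiplies out to $(2-c)^{d/2}$. You instead compute the coefficients explicitly from the per-literal factorization $t=\prod_{i\in T}(\alpha_i + \beta_i\phi_{\mu,\{i\}})$, observe that the $\ell_1$ norm itself factors as $\prod_{i\in T}(|\alpha_i|+|\beta_i|)$, and then apply the two-term inequality $(|\alpha_i|+|\beta_i|)^2 \le 2(\alpha_i^2+\beta_i^2) = 1+b_i\mu_i \le 2-c$ coordinate by coordinate. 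These are two packagings of the same underlying facts: your identity $\alpha_i^2+\beta_i^2 = \tfrac{1+b_i\mu_i}{2}$ is exactly the per-literal acceptance probability, so $\prod_i(\alpha_i^2+\beta_i^2)=\pr_\mu[t=1]$ recovers the paper's Parseval step, and your per-factor $\sqrt{2}$ loss aggregates to the paper's $2^{d/2}$. What your version buys is an explicit closed form for $\hat{t}_\mu$ and a per-variable bound that is sharper when the $|\mu_i|$ are not all at the extreme $1-c$; what the paper's version buys is brevity, since it never needs to write down the coefficients. Both are sound, and the equality $\|\hat{t}_\mu\|_1 = \|\hat{t}_\mu(B_d)\|_1$ is justified identically in both (the support is contained in subsets of $T$).
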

\begin{proof}
Let $S$ denote the set of all vectors in $\zon$ corresponding to subsets of $T$, that is $$S=\{ a \cond \forall i\in[n],\ (a_i=0 \bigvee \{x_i,\bar{x}_i\} \cap T \neq \emptyset)\}.$$

Clearly, $\|\hat{t}_{\mu}\|_1 = \|\hat{t}_{\mu}(B_d)\|_1 = \|\hat{t}_{\mu}(S)\|_1$. In addition, by Parseval's identity $$\|\hat{t}_\mu\|_2^2 = \E_\mu[t(x)^2]= \pr_\mu[t(x)=1]\leq (1-c/2)^d\ .$$
Now, by the Cauchy-Schwartz inequality,
$$\|\hat{t}_{\mu}(S)\|_1 \leq 2^{d/2} \cdot \|\hat{t}_{\mu}\|_2 = 2^{d/2} \cdot (1-c/2)^{d/2} = (2-c)^{d/2}\ ,$$
giving us the desired bound.
\end{proof}

We now use Lemmas \ref{lem:ptf-fourier-approx-bound}  and \ref{lem:term-l1-bound-prod} to give a bound for all product distributions.


\begin{theorem}
\label{th:dnf-fourier-approx-bound}
 Let $c \in (0,1]$ be a constant, $\mu$ be a $c$-bounded distribution and $\eps > 0$. For an integer $s > 0$ let $f$ be an $s$-term DNF.
For $d = \lfloor \log{(s/\eps)}/\log{(2/(2-c))} \rfloor$ and every bounded function $g:\pmi^n \rightarrow [-1,1]$, $$\E_\mu[|f(x)-g(x)|] \leq (2 \cdot (2-c)^{d/2} \cdot s + 1 ) \cdot \|\hat{f}_\mu(B_d) - \hat{g}_\mu(B_d)\|_\infty + 4\eps .$$
\end{theorem}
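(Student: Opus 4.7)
The plan is to invoke Lemma \ref{lem:ptf-fourier-approx-bound} with the natural $1$-sign-representing polynomial $p(x) = 2\sum_{i=1}^s t_i(x) - 1$ already used in Lemma \ref{lem:dnf-l1-bound}, where $t_1,\ldots,t_s$ are the $\{0,1\}$-valued terms of $f$. The obstacle is that $p$ itself has degree equal to the longest term of $f$, which can be much larger than $d$, so we cannot simply bound $\|\hat{p}_\mu(B_d)\|_1$. The resolution is to choose a different approximating polynomial $p'$ obtained by discarding the ``long'' terms, and to exploit $c$-boundedness to show that those discarded terms almost never evaluate to $1$.

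Concretely, I would partition $[s]$ into the short terms $S = \{i : |T_i| \leq d\}$ and the long terms $L = [s]\setminus S$, and set $p'(x) = 2\sum_{i \in S} t_i(x) - 1$. This $p'$ has degree at most $d$. Lemma \ref{lem:term-l1-bound-prod} gives $\|\hat{t_i}_\mu\|_1 \leq (2-c)^{|T_i|/2} \leq (2-c)^{d/2}$ for each $i \in S$, so the triangle inequality yields
$$\|\widehat{p'}_\mu(B_d)\|_1 \leq 2\sum_{i \in S}\|\hat{t_i}_\mu\|_1 + 1 \leq 2s(2-c)^{d/2} + 1.$$
For the approximation error, $c$-boundedness of $\mu$ implies that each literal is satisfied with probability at most $1-c/2$, so by independence $\pr_\mu[t_i(x)=1] \leq (1-c/2)^{|T_i|}$. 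The definition of $d$ as $\lfloor \log(s/\eps)/\log(2/(2-c)) \rfloor$ gives $d+1 > \log(s/\eps)/\log(2/(2-c))$, equivalently $(1-c/2)^{d+1} < \eps/s$. Every $i \in L$ has $|T_i| \geq d+1$, so $\pr_\mu[t_i=1] < \eps/s$. Since $t_i \in \{0,1\}$, we have $|p(x) - p'(x)| = 2\sum_{i \in L} t_i(x)$ and hence
$$\E_\mu[|p(x) - p'(x)|] \leq 2\sum_{i \in L} \pr_\mu[t_i(x)=1] < 2\eps.$$

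Substituting both bounds into Lemma \ref{lem:ptf-fourier-approx-bound} immediately gives the stated inequality. The only genuinely non-routine step is guessing the right $p'$; once the short/long split is in place, the degree-$1$ bound on $\|\widehat{p'}_\mu(B_d)\|_1$ is a direct consequence of Lemma \ref{lem:term-l1-bound-prod} and the $L_1$ approximation bound is a direct consequence of $c$-boundedness together with the choice of $d$.
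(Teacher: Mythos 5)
Your proposal is correct and mirrors the paper's own proof almost line for line: the paper likewise takes $p(x) = 2\sum_i t_i(x) - 1$, drops the terms of length $\geq d+1$ (your set $L$ is the paper's $M$) to form $p'$, bounds $\|\widehat{p'}_\mu(B_d)\|_1$ via Lemma \ref{lem:term-l1-bound-prod}, bounds $\E_\mu[|p-p'|] \leq 2\eps$ from $c$-boundedness and the choice of $d$, and then applies Lemma \ref{lem:ptf-fourier-approx-bound}. No substantive differences.
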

\begin{proof}
As in the proof of Lemma \ref{lem:dnf-l1-bound}, let $t_1(x),t_2(x),\ldots,t_s(x)$ denote the $\zo$ versions of each of the terms of $f$ and let $p(x) = 2 \sum_{i\in[s]} t_i(x) - 1$ be a polynomial that 1-sign-represents $f$.  Now let $M \subseteq [s]$ denote the set of indices of $f$'s terms which have length $\geq d+1 \geq \log{(s/\eps)}/\log{(2/(2-c))}$ and let $p'(x) = 2\sum_{i\not\in M} t_i(x) -1$. In other words, $p'$ is $p$ with contributions of long terms removed and, in particular, is a degree-$d$ polynomial.

For each $i \in M$, $\E_\mu[t_i(x)] = \pr_\mu[t_i(x)=1] \leq (1-c/2)^{d+1} \leq \eps/s$. This implies that \equ{\E_\mu[|p'(x) - p(x)|] \leq \sum_{i\in M} \E_\mu[2|t_i(x)|] \leq 2\eps\ .\label{eq:dnf-small-poly-diff}}  Using Lemma \ref{lem:term-l1-bound-prod}, we get \equ{\|\widehat{p'}_\mu(B_d)\|_1 \leq 2\sum_{i\not\in M} \|\hat{t_i}_\mu(B_d)\|_1 + 1  \leq 2\cdot (2-c)^{d/2}\cdot s + 1 . \label{eq:l1-dnf-total-bound-prod}}

We can now apply Lemma \ref{lem:ptf-fourier-approx-bound} and equations (\ref{eq:dnf-small-poly-diff}, \ref{eq:l1-dnf-total-bound-prod}) to obtain \alequn{\E_\mu[|f(x)-g(x)|] & \leq \|\hat{f}_\mu(B_d) - \hat{g}_\mu(B_d)\|_\infty \cdot \|\widehat{p'}_\mu(B_d)\|_1\  + 2\E_\mu[|p'(x) - p(x)|] \\ & \leq (2\cdot (2-c)^{d/2}\cdot s +1) \cdot \|\hat{f}_\mu(B_d) - \hat{h}_\mu(B_d)\|_\infty  + 4\eps .}
\end{proof}

It is easy to see that Theorem \ref{th:dnf-fourier-approx-bound} generalizes to any function that can be expressed as low-weight linear threshold of terms. Specifically, we prove the following generalization (the proof appears in Appendix \ref{sec:app-proofs}).
\begin{theorem}
\label{th:ltf-fourier-approx-bound}
 Let $c \in (0,1]$ be a constant, $\mu$ be a $c$-bounded distribution and $\eps > 0$. For an integer $s > 0$ let $f = h(u_1,u_2,\ldots,u_s)$, where $h$ is an LTF over $\pmi^s$ and $u_i$'s are terms.
For $d = \lfloor \log{(W_1^1(h)/\eps)}/\log{(2/(2-c))} \rfloor$ and every bounded function $g:\pmi^n \rightarrow [-1,1]$, $$\E_\mu[|f(x)-g(x)|] \leq (2 \cdot (2-c)^{d/2}  +1)\cdot W_1^1(h) \cdot \|\hat{f}_\mu(B_d) - \hat{g}_\mu(B_d)\|_\infty + 4\eps .$$ For $c=1$, $(2-c)^{d/2} =1$ and for $c \in (0,1)$, $(2-c)^{d/2} \leq (W_1^1(h)/\eps)^{(1/\log{(2/(2-c))} -1)/2}$.
\end{theorem}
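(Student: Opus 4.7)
The plan is to mirror the structure of the proof of Theorem~\ref{th:dnf-fourier-approx-bound}, replacing the specific $2\sum_i t_i(x)-1$ representation of a DNF with a generic representation coming from the definition of $W_1^1(h)$. By definition, there is a degree-$1$ polynomial $q(y_1,\ldots,y_s) = q_0 + \sum_{i=1}^s q_i y_i$ that $1$-sign-represents $h$ with $|q_0| + \sum_i |q_i| = W_1^1(h)$. Substituting $y_i = u_i(x)$ gives a polynomial $p(x) = q_0 + \sum_i q_i u_i(x)$ that $1$-sign-represents $f(x)$, and I plan to use this $p$ as input to Lemma~\ref{lem:ptf-fourier-approx-bound} together with a carefully chosen low-degree approximator $p'$.

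To build $p'$, let $M \subseteq [s]$ index the terms $u_i$ of length at least $d+1$. The key conversion is that each $\pmi$-valued term satisfies $u_i = 2t_i - 1$ where $t_i$ is the corresponding $\zo$-valued AND; its ``typical'' value under a $c$-bounded $\mu$ is $-1$, since $\E_\mu[t_i(x)] = \pr_\mu[t_i(x)=1] \leq (1-c/2)^{d+1}$. I therefore define $p'(x)$ by replacing $u_i(x)$ with $-1$ for $i\in M$, i.e.
\[
p'(x) \;=\; \Bigl(q_0 - \sum_{i\in M} q_i\Bigr) + \sum_{i\notin M} q_i\, u_i(x),
\]
which is a degree-$d$ polynomial. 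Then $p(x)-p'(x) = \sum_{i\in M} q_i(u_i(x)+1) = 2\sum_{i\in M} q_i t_i(x)$, so by the choice of $d$,
\[
\E_\mu[|p(x)-p'(x)|] \;\leq\; 2\sum_{i\in M} |q_i|\, \pr_\mu[t_i=1] \;\leq\; 2\,W_1^1(h)\,(1-c/2)^{d+1} \;\leq\; 2\eps.
\]

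For the $L_1$ norm of $\widehat{p'}_\mu(B_d)$, I use Lemma~\ref{lem:term-l1-bound-prod} on each short $t_i$ and the triangle inequality on $u_i = 2t_i-1$ to get $\|\hat{u_i}_\mu\|_1 \leq 2(2-c)^{d/2} + 1$. Summing with the shifted constant,
\[
\|\widehat{p'}_\mu(B_d)\|_1 \;\leq\; \Bigl|q_0 - \sum_{i\in M} q_i\Bigr| + \sum_{i\notin M} |q_i|\bigl(2(2-c)^{d/2}+1\bigr) \;\leq\; \bigl(2(2-c)^{d/2}+1\bigr) W_1^1(h),
\]
where the last step uses $|q_0| + \sum_i |q_i| = W_1^1(h)$ and $(2-c)^{d/2}\geq 1$. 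Plugging these two bounds into Lemma~\ref{lem:ptf-fourier-approx-bound} delivers the claimed inequality.

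The closing ``In particular'' clause is a direct calculation: for $c=1$ we have $(2-c)^{d/2}=1$, and for $c\in(0,1)$ I will substitute the upper bound $d \leq \log(W_1^1(h)/\eps)/\log(2/(2-c))$ and simplify $(2-c)^{d/2}$ using the identity $(2-c) = 2\cdot (2/(2-c))^{-1}$, which after taking logarithms yields the stated exponent $(1/\log(2/(2-c))-1)/2$. The only real subtlety in the argument is the bookkeeping around the $\pmi$-to-$\zo$ conversion of terms; once one commits to pruning long terms by replacing them with $-1$ (not by deleting them, which would cost $\sum_{i\in M}|q_i|\cdot 1$ in the $L_1$ distance), everything falls out of Lemmas~\ref{lem:ptf-fourier-approx-bound} and \ref{lem:term-l1-bound-prod} essentially verbatim.
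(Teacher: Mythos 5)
Your proof is correct and follows essentially the same route as the paper's: define $p(x) = q_0 + \sum_i q_i u_i(x)$ from a $1$-sign-representing degree-$1$ polynomial for $h$, form $p'$ by replacing long terms with $-1$, bound $\E_\mu[|p-p'|]$ via the probability that a long term fires, bound $\|\widehat{p'}_\mu(B_d)\|_1$ via Lemma~\ref{lem:term-l1-bound-prod} applied to short terms, and close with Lemma~\ref{lem:ptf-fourier-approx-bound}. The only cosmetic difference is in how you regroup terms when bounding $\|\widehat{p'}_\mu(B_d)\|_1$ (you first apply the triangle inequality to $|q_0 - \sum_{i\in M}q_i|$ and then use $(2-c)^{d/2}\geq 1$, whereas the paper folds the $+1$ from each $\|\widehat{u_i}_\mu\|_1$ bound into the $\|w\|_1$ sum), and the two are equivalent.
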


\section{Construction of a Fourier Spectrum Approximating Function}
\label{sec:construct}
As follows from Corollary \ref{cor:uniform-dnf-fourier-approx-bound} (and Th.~\ref{th:dnf-fourier-approx-bound}), to $\eps$-approximate a DNF expression over a product distribution, it is sufficient to find a bounded function $g$ such that $g$ has approximately the same Fourier spectrum as $f$. In this section we show how this can be done by giving an algorithm which constructs a function with the desired Fourier spectrum or the low-degree part thereof.

Our algorithm is based on the following idea: given a bounded function $g$ such that for some $a$, $|\hat{f}(a) - \hat{g}(a)| \geq \gamma$ we show how to obtain a bounded function $g_1$ which is closer in $L_2$ distance squared to $f$ than $g$. Parseval's identity states that $\E[(f-g)^2] = \sum_b (\hat{f}(b)-\hat{g}(b))^2$. Therefore to improve the distance to $f$ we do the simplest imaginable update: define $g' = g + (\hat{f}(a) - \hat{g}(a)) \chi_a$. In other words $g'$ is the same as $g$ but with $a$'s Fourier coefficient set to $\hat{f}(a)$. Clearly,
$$\E[(f-g')^2] = \sum_{b\neq a} (\hat{f}(b)-\hat{g}(b))^2 = \E[(f-g)^2] - (\hat{f}(a) - \hat{g}(a))^2  \leq \E[(f-g)^2] - \gamma^2. $$
The only problem with this approach is that $g'$ is not necessarily a function with values bounded in $[-1,1]$. However, following the idea from \citep{Feldman:09sqd}, we can we convert $g'$ to a bounded function $g_1$ by cutting-off all values outside of $[-1,1]$ (which is achieved by applying the projection function $P_1$). The target function $f$ is boolean and therefore this step can only decrease the $L_2$ distance squared to $f$. This simple argument implies that starting with $g \equiv 0$ we can update it iteratively until we reach a bounded function $g_t$ such that for all $a$, $|\hat{f}(a) - \hat{g}(a)| \leq \gamma$. The decrease in the $L_2$ distance squared at every step implies that the total number of steps cannot exceed $1/\gamma^2$. Also note that for running this algorithm the only thing we need are (the approximate values of) the Fourier coefficients of $f$.

\eat{
is equivalent to $|\E[(f-g)\chi_a]| \geq \gamma$. This means that $\chi_a$ is correlated with function $f-g$. The function $-2(f-g)$ is the gradient of the function $(f(x)-g(x))^2$ at point (in the $2^n$ dimensional space) $g(x)$. Therefore, as was observed in \citep{TrevisanTV:09} and \citep{Feldman:09sqd}, for $\gamma' = \sgn(\hat{f}(a) - \hat{g}(a)) \gamma$ and  $g' = g + \gamma' \cdot \chi_a$ we will obtain that $\E[(f-g'^2] \leq \E[(f-g)^2] - \gamma^2$. In other words, $g'$ is closer in $L_2$ distance squared to $f$ than $g$.
}

We now state and prove the claim formally. The input to our algorithm is a vector $\tilde{f}(B_d) \in \R^{B_d}$ of desired coefficients up to degree $d$ given to some accuracy $\gamma$. Further, in our applications we will only use vectors with at most $O(1/\gamma^2)$ non-zero coefficients since for every Boolean function at most $1/\gamma^2$ of its Fourier coefficients are of magnitude greater than $\gamma$ and smaller coefficients are approximated by $0$.

\begin{theorem}
\label{th:ptf-reconstruct}
There exists a randomized algorithm \ptfapprox that for every boolean function $f: \{-1,1\}^n \rightarrow \{-1,1\}$, given $\gamma>0,\delta>0$ a degree bound $d$ and a succinctly-represented vector of coefficients $\tilde{f}(B_d) \in \R^{B_d}$ such that $\|\hat{f}(B_d) - \tilde{f}(B_d)\|_\infty \leq \gamma$ and $\|\tilde{f}(B_d)\|_0 = O(1/\gamma^2)$,
with probability at least $1-\delta$, outputs a bounded function $g:\pmi^n \rightarrow [-1,1]$ such that $\|\hat{f}(B_d) - \hat{g}(B_d)\|_\infty \leq 5\gamma$. The algorithm runs in time polynomial in $n$, $1/\gamma$ and $\logd$.
\end{theorem}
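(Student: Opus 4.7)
The algorithm \ptfapprox maintains a bounded function $g_t$, starting from $g_0 \equiv 0$, that can be evaluated at any point $x \in \pmn$ by replaying the sequence of updates used to construct it. At each iteration, I would run the KM algorithm (Theorem~\ref{th:km}) on $g_t$ with accuracy $\gamma/4$ and failure probability $O(\delta \gamma^2)$, obtaining a sparse vector $\tilde{g}_t$ with $\|\hat{g}_t - \tilde{g}_t\|_\infty \leq \gamma/4$ and $\|\tilde{g}_t\|_0 = O(1/\gamma^2)$. I then examine the candidate set $S = (\mathrm{supp}(\tilde{f}(B_d)) \cup \mathrm{supp}(\tilde{g}_t)) \cap B_d$, which has size $O(1/\gamma^2)$: if some $a \in S$ satisfies $|\tilde{f}(a) - \tilde{g}_t(a)| \geq 3\gamma$, I update $g_{t+1} = P_1\bigl(g_t + (\tilde{f}(a) - \tilde{g}_t(a))\chi_a\bigr)$ and iterate; otherwise the algorithm outputs $g_t$.

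\textbf{Correctness at termination.} For any $a \in B_d$, one of two cases applies. If $a \in S$, then the stopping condition combined with the input and KM approximation errors yields, by the triangle inequality, $|\hat{f}(a) - \hat{g}_t(a)| \leq |\hat{f}(a)-\tilde{f}(a)| + |\tilde{f}(a) - \tilde{g}_t(a)| + |\tilde{g}_t(a)-\hat{g}_t(a)| \leq \gamma + 3\gamma + \gamma/4 < 5\gamma$. If $a \notin S$, then $\tilde{f}(a) = 0$ and $\tilde{g}_t(a) = 0$, whence $|\hat{f}(a)| \leq \gamma$ and $|\hat{g}_t(a)| \leq \gamma/4$ (by the KM guarantee on the zero coordinate), again giving $|\hat{f}(a) - \hat{g}_t(a)| \leq 5\gamma/4 < 5\gamma$.

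\textbf{Progress via Parseval.} Following the sketch preceding the theorem, I track the potential $\Phi_t := \E[(f - g_t)^2] = \sum_b (\hat{f}(b) - \hat{g}_t(b))^2$. Writing $\beta = \hat{f}(a) - \hat{g}_t(a)$ and $\alpha = \tilde{f}(a) - \tilde{g}_t(a)$, the pre-projection update modifies only the $a$-th Fourier coefficient, so $\Phi_{t+1/2} - \Phi_t = (\beta - \alpha)^2 - \beta^2$. Since $|\alpha - \beta| \leq \gamma + \gamma/4 = 5\gamma/4$ and, when an update fires, $|\alpha| \geq 3\gamma$, we get $|\beta| \geq 7\gamma/4$ and the potential drops by at least $(7\gamma/4)^2 - (5\gamma/4)^2 = 3\gamma^2/2$. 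Projection via $P_1$ only shrinks $\Phi$ further, because $f$ is $\{-1,1\}$-valued and hence $|f(x) - P_1(y)| \leq |f(x) - y|$ pointwise. Starting from $\Phi_0 = \E[f^2] \leq 1$, the algorithm terminates after at most $O(1/\gamma^2)$ iterations.

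\textbf{Main obstacle and complexity.} The non-obvious step is handling coefficients $a \in B_d \setminus \mathrm{supp}(\tilde{f})$ where the projection may have pushed $|\hat{g}_t(a)|$ above the threshold the final bound can tolerate; this is precisely why every iteration invokes KM on $g_t$, exploiting $\|g_t\|_\infty \leq 1$ and Parseval to cap the number of $(\gamma/4)$-heavy coefficients of $g_t$ at $O(1/\gamma^2)$ and keep $S$ small enough to search exhaustively. Each iteration runs in $\poly(n, 1/\gamma, \log(1/\delta))$ time, dominated by the KM call; each evaluation of $g_t$ required by KM unfolds the $O(1/\gamma^2)$ previous updates and projections in time polynomial in $n$ and $1/\gamma$. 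A union bound over the $O(1/\gamma^2)$ KM invocations yields overall success probability $1 - \delta$, matching the theorem's claim.
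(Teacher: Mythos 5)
Your proposal is correct and follows essentially the same approach as the paper's proof: the same iterative KM-estimate, correct-one-coefficient, then project-to-$[-1,1]$ loop; the same $L_2$-squared Parseval potential $\E[(f-g_t)^2]$ and the same observation that projection by $P_1$ can only decrease pointwise distance to the $\{-1,1\}$-valued $f$, bounding the iteration count by $O(1/\gamma^2)$. You chose slightly different constants (KM accuracy $\gamma/4$ and update threshold $3\gamma$ instead of the paper's $\gamma/2$ and $7\gamma/2$), which is an immaterial variation.
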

\begin{proof}
We build $g$ via the following iterative process.
Let $g_0 \equiv 0$.  At step $t$, given $g_t$, we run the KM algorithm (Th.~\ref{th:km}) to compute all the Fourier coefficients of $g_t$ which are of degree at most $d$ to accuracy $\gamma/2$. Let $\widetilde{g_t}(B_d) \in \R^{B_d}$ denote the vector of estimates output by the algorithm. By Theorem \ref{th:km}, there are at most $16/\gamma^2$ non-zero coefficients in $\widetilde{g_t}(B_d)$. For now let's assume that the output of the KM is always correct; we will deal with the confidence bounds later in the standard manner.

If $\|\widetilde{g_t}(B_d) - \tilde{f}(B_d)\|_\infty \leq 7\gamma/2$, then we stop and output $g_t$.
By triangle inequality, \alequn{\|\hat{f}(B_d) - \widehat{g_t}(B_d)\|_\infty &\leq \|\hat{f}(B_d) - \tilde{f}(B_d)\|_\infty + \|\tilde{f}(B_d) - \widetilde{g_t}(B_d)\|_\infty + \|\widetilde{g_t}(B_d) - \widehat{g_t}(B_d)\|_\infty \\&\leq \gamma+7\gamma/2+\gamma/2 = 5\gamma\ ,} in other words $g_t$ satisfies the claimed condition.

Otherwise, there exists $a \in B_d$ such that $|\widetilde{g_t}(a) - \tilde{f}(a)| > 7\gamma/2$. We note that using the succinct representation of $\hat{f}(B_d)$ and $\widehat{g_t}(B_d)$ such $a$ can be found in $O(n (\|\widetilde{g_t}\|_0 + \|\tilde{f}\|_0)) = O(n/\gamma^2)$ time. First observe that, by triangle inequality, $$|\widehat{g_t}(a) - \hat{f}(a)| \geq |\widetilde{g_t}(a) - \tilde{f}(a)| -  |\tilde{f}(a) - \hat{f}(a)| - |\widehat{g_t}(a) - \widetilde{g_t}(a)| \leq 7\gamma/2 - \gamma - \gamma/2 = 2\gamma.$$

Let $g'_{t+1} = g_t +  (\tilde{f}(a) - \widetilde{g_t}(a))\chi_a$. The Fourier spectrums of $g_t$ and $g'_{t+1}$ differ only on $a$. Therefore, by using Parseval's identity, we obtain that \alequ{\E[(f-g_t)^2] - \E[(f-g'_{t+1})^2] \nonumber &= (\hat{f}(a)-\widehat{g_t}(a))^2 - (\hat{f}(a) -\tilde{f}(a)+ \widetilde{g_t}(a)-\hat{g}(a))^2 \\ &\geq (2\gamma)^2 - (3\gamma/2)^2 =7\gamma^2/4\ . \label{eq:decrement}}
Now let $g_{t+1} = P_1(g_t)$. For every $x$, $(f(x)-g_{t+1}(x))^2 \leq (f(x)-g'_{t+1}(x))^2$. Together with equation (\ref{eq:decrement}) this implies that $\E[(f-g_{t+1})^2] \leq  \E[(f-g_t)^2] - 7\gamma^2/4$.
At step $0$ we have $\E[(f-g_0)^2] = 1$ and therefore the process will terminate after at most $4/(7\gamma^2)$ steps.

We note that in order to make sure that the success probability is at leat $1-\delta$ it is sufficient to run the KM algorithm with confidence parameter $4\delta /(7\gamma^2)$. At step $t$ evaluating $g_t$ on any point $x$ takes $O(t \cdot n)$ time and therefore each invocation of the KM algorithm takes $\tilde{O}(n^2 \cdot\gamma^{-8} \cdot \logd)$ time. Overall this implies that the running time of \ptfapprox is $\tilde{O}(n^2 \cdot\gamma^{-10} \cdot \logd)$.
\end{proof}

A simple observation about \ptfapprox is that it does not rely on the update step being a multiple of a boolean function. Therefore it would work verbatim for any orthonormal basis and not only parities. Therefore, by using the EKM algorithm in place of KM we can easily extend our algorithm to any product distribution.
\begin{theorem}
\label{th:ptf-reconstruct-product}
There exists a randomized algorithm \ptfapproxprod that for every $\mu \in (-1,1)^n$, boolean function $f: \{-1,1\}^n \rightarrow \{-1,1\}$, given $\mu, \gamma>0,\delta>0$, a degree bound $d$ and a succinctly-represented vector of coefficients $\tilde{f}_\mu(B_d) \in \R^{B_d}$ such that $\|\hat{f}_\mu(B_d) - \tilde{f}_\mu(B_d)\|_\infty \leq \gamma$ and $\|\tilde{f}_\mu(B_d)\|_0 = O(1/\gamma^2)$, with probability at least $1-\delta$, outputs a function $g:\pmi^n \rightarrow [-1,1]$ such that $\|\hat{f}_\mu(B_d) - \hat{g}_\mu(B_d)\|_\infty \leq 5\gamma$. The algorithm runs in time polynomial in $n$, $1/\gamma$ and $\logd$.
\end{theorem}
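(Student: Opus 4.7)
The plan is to run essentially the same iterative procedure as in the proof of Theorem~\ref{th:ptf-reconstruct}, replacing the parity basis $\{\chi_a\}$ with the $\mu$-orthonormal basis $\{\phi_{\mu,a}\}$ and the KM subroutine with the EKM algorithm of Theorem~\ref{th:ekm}. Starting from $g_0 \equiv 0$, at each step $t$ I would call EKM on $g_t$ with accuracy $\gamma/2$ to obtain $\widetilde{g_t}_\mu(B_d)$. If $\|\widetilde{g_t}_\mu(B_d)-\tilde{f}_\mu(B_d)\|_\infty \leq 7\gamma/2$, halt and output $g_t$; the triangle inequality then gives $\|\hat{f}_\mu(B_d)-\widehat{g_t}_\mu(B_d)\|_\infty \leq \gamma + 7\gamma/2 + \gamma/2 = 5\gamma$, as required. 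Otherwise pick a witnessing $a \in B_d$, define $g'_{t+1} = g_t + (\tilde{f}_\mu(a)-\widetilde{g_t}_\mu(a))\,\phi_{\mu,a}$, and set $g_{t+1} = P_1(g'_{t+1})$.

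For the progress analysis, Parseval's identity $\E_\mu[h^2]=\sum_a \hat{h}_\mu(a)^2$ holds verbatim for any product distribution $\mu$. Since $g'_{t+1}$ differs from $g_t$ only in its $a$-th $\mu$-coefficient (shifted by exactly $\tilde{f}_\mu(a)-\widetilde{g_t}_\mu(a)$), the same algebraic manipulation that produced equation~(\ref{eq:decrement}) yields $\E_\mu[(f-g_t)^2]-\E_\mu[(f-g'_{t+1})^2] \geq 7\gamma^2/4$. Because $f(x) \in \pmi \subseteq [-1,1]$, applying $P_1$ pointwise can only bring $g'_{t+1}$ closer to $f$, so the same decrement holds after replacing $g'_{t+1}$ with $g_{t+1}$. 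Combined with $\E_\mu[(f-g_0)^2]=\E_\mu[f^2]=1$, this bounds the total number of iterations by $4/(7\gamma^2)$.

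I do not anticipate any genuinely difficult step here; the author's remark immediately before the theorem statement already observes that the uniform-distribution analysis depends only on having an orthonormal basis. The main thing to be careful about is the per-step running time: unlike parities, the basis functions $\phi_{\mu,a}$ can have magnitude larger than $1$, so as a symbolic sum $g'_{t+1}$ may take values outside $[-1,1]$. However, for the EKM subroutine we only need oracle access to $g_t$, and each evaluation can be computed in $O(tn)$ time by evaluating each of the at most $O(t)=O(1/\gamma^2)$ stored basis functions $\phi_{\mu,a}(x)$, summing, and applying $P_1$. Running EKM with confidence parameter $O(\delta \gamma^2)$ and taking a union bound over all iterations yields overall failure probability at most $\delta$; combined with the time bound of Theorem~\ref{th:ekm} this gives the claimed $\poly(n, 1/\gamma, \log(1/\delta))$ running time.
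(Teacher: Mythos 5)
Your proposal is correct and follows the same approach the paper intends: replace the parity basis with the $\mu$-orthonormal basis $\{\phi_{\mu,a}\}$, replace KM with EKM, and note that Parseval's identity and the pointwise contractivity of $P_1$ (which only needs $f$ to be $[-1,1]$-valued) carry over verbatim. One small imprecision in your running-time remark: since $P_1$ is applied after every update in this variant, evaluating $g_t(x)$ unwinds a \emph{nested} sequence of projections rather than ``summing and applying $P_1$'' once (the latter would be the proper variant of Theorem~\ref{th:ptf-reconstruct-proper}, whose decrement analysis is different), but this does not change the $O(tn)$ per-evaluation cost or any other part of the argument.
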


\subsection{A Proper Construction Algorithm}
\label{sec:proper-recon}
One disadvantage of this construction is that $g$ output by \ptfapprox is not a PTF itself. The reason for this is that the projection operation $P_1$ is applied after every update. We now show that instead of applying the projection step after every update it is sufficient to apply the projection once to all the updates. This idea is based on Impagliazzo's \citeyearpar{Impagliazzo:95} argument in the context of hardcore set construction, and is also the basis for the algorithm of \citet{TrevisanTV:09}. Impagliazzo's proof uses the same $L_2$ squared potential function but requires an additional point-wise counting argument to prove that the potential can be used to bound the number of steps. Instead, we augment the potential function in a way that captures the additional counting argument and generalized to non-boolean functions (necessary for the product distribution case). As a result the algorithm will output a function of the form $P_1(\sum_{a\in B_d} \alpha_a \chi_a)$ which is then converted to a PTF by applying the sign function. The same idea is also used in the Chow parameter reconstruction algorithm of \citet{DeDFS:12}. The modified proof also allows us to easily derive a bound on the total integer weight of the resulting PTF and optimize the running time of the algorithm (the optimization of running time is deferred to a full version of this work).

\eat{
We will use the following definition for the type of functions output by our proper construction algorithm.
\begin{definition} \label{def:lbf}
A function $g:\{-1,1\}^n \rightarrow [-1,1]$ is referred to as a degree $d$ {\em polynomial bounded function} (PBF)  if there exists a vector of real values $w \in \R^{B_d}$ such that
$g(x)  = P_1(\sum_{a \in B_d}^n w(a) \chi_a(x))$. The vector $w$ is said to represent $g$.
\end{definition}
}

\begin{theorem}
\label{th:ptf-reconstruct-proper}
There exists a randomized algorithm \ptfreconprod that for every $\mu \in (-1,1)^n$, boolean function $f: \{-1,1\}^n \rightarrow \{-1,1\}$, given $\mu, \gamma>0,\delta>0$, a degree bound $d$ and a succinctly-represented vector of coefficients $\tilde{f}_\mu(B_d) \in \R^{B_d}$ such that $\|\hat{f}_\mu(B_d) - \tilde{f}_\mu(B_d)\|_\infty \leq \gamma$ and $\|\tilde{f}_\mu(B_d)\|_0 = O(1/\gamma^2)$,
with probability at least $1-\delta$, outputs a bounded function $g:\pmi^n \rightarrow [-1,1]$ such that $\|\hat{f}_\mu(B_d) - \hat{g}_\mu(B_d)\|_\infty \leq 5\gamma$. The algorithm runs in time polynomial in $n$, $1/\gamma$ and $\logd$. In addition, $g(x) = P_1(g'(x))$ for a degree-$d$ polynomial such that $\widehat{g'}_\mu = \gamma \cdot \hat{p}_\mu$ where $\hat{p}_\mu$ is a vector of integers and $\|\hat{p}_\mu\|_1 \leq 1/(2\gamma^2)$.
\end{theorem}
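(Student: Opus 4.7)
The plan is to modify \ptfapproxprod so that the projection $P_1$ is applied only once, at the very end. Maintain a polynomial $g'_t$ with $g'_0 \equiv 0$ and let $g_t := P_1(g'_t)$. At step $t$, invoke EKM (Theorem~\ref{th:ekm}) on $g_t$ with accuracy $\gamma/2$, obtaining $\widetilde{g_t}_\mu(B_d)$. If $\|\tilde{f}_\mu(B_d) - \widetilde{g_t}_\mu(B_d)\|_\infty \leq 7\gamma/2$, halt and output $g_t$; by the same triangle inequality used in the proof of Theorem~\ref{th:ptf-reconstruct}, this ensures $\|\hat{f}_\mu(B_d) - \widehat{g_t}_\mu(B_d)\|_\infty \leq 5\gamma$. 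Otherwise pick a witness $a \in B_d$ with $|\tilde{f}_\mu(a) - \widetilde{g_t}_\mu(a)| > 7\gamma/2$, set $\sigma = \sgn(\tilde{f}_\mu(a) - \widetilde{g_t}_\mu(a))$, and update $g'_{t+1} = g'_t + \gamma\sigma\phi_{\mu,a}$.

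The integer-weight structure claimed in the theorem falls out of the update rule for free: since $g'_0 \equiv 0$ and every iteration adds $\pm\gamma$ to a single $\mu$-Fourier coefficient, at all times $\widehat{g'_t}_\mu = \gamma \cdot \hat{p}_\mu^{(t)}$ for an integer vector $\hat{p}_\mu^{(t)}$, and $\|\hat{p}_\mu^{(t)}\|_1$ grows by at most one per step. Hence the claimed bound $\|\hat{p}_\mu\|_1 \leq 1/(2\gamma^2)$ reduces to showing that the algorithm halts in at most $1/(2\gamma^2)$ iterations.

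The main obstacle is the termination argument: once the projection is deferred, the naive $L_2$ potential $\E_\mu[(f - g_t)^2]$ need not decrease monotonically, so the telescoping used in Theorem~\ref{th:ptf-reconstruct} breaks. To capture Impagliazzo's counting trick inside a potential function, I would use the Huber-type function $\psi(y) := \int_0^y P_1(z)\,dz$, equal to $y^2/2$ for $|y| \leq 1$ and $|y| - 1/2$ otherwise; it is convex, satisfies $\psi'(y) = P_1(y)$, and has $1$-Lipschitz derivative, yielding the Taylor inequality $\psi(y+h) \leq \psi(y) + h\,P_1(y) + h^2/2$. Define $\Phi_t := \E_\mu[\psi(g'_t(x)) - f(x)g'_t(x)]$, apply the Taylor bound pointwise with $h = \gamma\sigma\phi_{\mu,a}(x)$, and take expectations. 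Using $P_1(g'_t) = g_t$ and $\E_\mu[\phi_{\mu,a}^2] = 1$, this gives $\Phi_{t+1} - \Phi_t \leq \gamma\sigma(\widehat{g_t}_\mu(a) - \hat{f}_\mu(a)) + \gamma^2/2$. The choice of $a$ and $\sigma$ combined with the EKM error bounds (same computation as in Theorem~\ref{th:ptf-reconstruct}) forces $\sigma(\widehat{g_t}_\mu(a) - \hat{f}_\mu(a)) \leq -2\gamma$, so $\Phi_t$ drops by at least $3\gamma^2/2$ each iteration.

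Finally, $\Phi_0 = 0$, and a direct case check on $f \in \{-1,1\}$ yields $\psi(y) - fy \geq -1/2$ for all $y \in \R$, so $\Phi_t \geq -1/2$ throughout; consequently the algorithm halts in at most $1/(3\gamma^2) \leq 1/(2\gamma^2)$ steps, which in turn delivers the integer weight bound. Each iteration is dominated by an EKM call (evaluating $g_t = P_1(g'_t)$ at a queried point takes $O(n/\gamma^2)$ time since $g'_t$ has at most $O(1/\gamma^2)$ nonzero Fourier coefficients), and running each EKM invocation with confidence parameter $\delta\gamma^2/2$ together with a union bound yields overall success probability at least $1-\delta$. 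The output is exactly $g = P_1(g')$ for a degree-$d$ polynomial $g'$ with the stated integer-weight $\mu$-Fourier spectrum, as required.
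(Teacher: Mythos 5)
Your proposal is correct and is, up to an affine reparametrization, the same potential argument as the paper's: the paper's potential $E(t) = \E_\mu[(f-g_t)(f-2g'_t+g_t)]$ satisfies $E(t) = 1 + 2\Phi_t$ (using $2\psi(g'_t) = 2g_tg'_t - g_t^2$ for all $g'_t$ and $f^2=1$), so your $\Phi_t \ge -1/2$ is the paper's $E(t)\ge 0$, and your Taylor-type one-step drop is the paper's pointwise bound on $(g_{t+1}-g_t)(2g'_{t+1}-g_t-g_{t+1})$ in disguise. Your packaging via the Huber function $\psi$ with $1$-Lipschitz derivative $P_1$ is cleaner than the paper's direct algebraic manipulation and in fact yields a tighter per-step drop (the paper's pointwise bound $2\gamma^2\chi_{a,\mu}^2$ is a factor of two loose compared to your $\gamma^2\phi_{\mu,a}^2$), giving termination in $1/(3\gamma^2)$ rather than $1/(2\gamma^2)$ iterations, both of which suffice for the stated $\|\hat p_\mu\|_1 \le 1/(2\gamma^2)$.
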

\begin{proof}
As in the proof of Theorem \ref{th:ptf-reconstruct}, we build $g$ via an iterative process starting from $g'_0 \equiv 0$ and $g_0 = P_1(g'_0)$.
We use the EKM algorithm (Th.~\ref{th:ekm}) to compute $\widetilde{g_t}_\mu(B_d)$ and stop and return $g_t$ if $\|\widetilde{g_t}_\mu(B_d) - \tilde{f}_\mu(B_d)\|_\infty \leq 7\gamma/2$.
Otherwise (there exists $a \in B_d$ such that $|\widetilde{g_t}_\mu(a) - \tilde{f}_\mu(a)| > 7\gamma/2$ and $|\widehat{g_t}_{\mu}(a) - \hat{f}_\mu(a)| > 2\gamma$), we let $\gamma' = \gamma \cdot \sgn(\tilde{f}_\mu(a) - \widetilde{g_t}_\mu(a))$, $g'_{t+1} = g'_{t} + \gamma' \chi_{a,\mu}$ and $g_{t+1} = P_1(g'_{t+1})$.

We prove a bound on the total number of steps using the following potential function: $$E(t) = \E_\mu[(f-g_t)^2] + 2\E_\mu[(f-g_t)(g_t-g'_t)] = \E_\mu[(f-g_t)(f-2g'_t+g_t)].$$ The key claim of this proof is that  $E(t) - E(t+1) \geq \gamma^2$.
First,
\begin{eqnarray}
E(t) - E(t+1) & =& \E_\mu[(f-g_t)(f-2g'_t+g_t)] - \E_\mu[(f-g_{t+1})(f-2g'_{t+1}+g_{t+1})] \notag   \\
&=& \E_\mu\left[(f-g_t)(2 g'_{t+1} - 2 g'_t) - (g_{t+1}-g_t)(2g'_{t+1}-g_t-g_{t+1})        \right] \notag \\
& =& \E_\mu[2(f-g_t)\gamma' \chi_{a,\mu}] - \E_\mu\left[(g_{t+1}-g_t)(2g'_{t+1}-g_t-g_{t+1})  \right] \label{eq:bound-step}
\end{eqnarray}
We observe that $\E_\mu[2(f-g_t)\gamma' \chi_{a,\mu}] = 2 \gamma' (\hat{f}_\mu(a) - \widehat{g_t}_\mu(a))$ and that $\sgn(\hat{f}_\mu(a) - \widehat{g_t}_\mu(a)) = \sgn(\tilde{f}_\mu(a) - \widetilde{g_t}_\mu(a))$. Therefore, \eat{using equation (\ref{eq:coeff-diff})} we get \equ{\E_\mu[2(f-g_t)\gamma' \chi_a] \geq 2 \gamma |\hat{g}_{t,\mu}(a) - \hat{f}_\mu(a)| \geq 4\gamma^2\ .\label{eq:bound-gradient}}

To upper-bound the expression $\E_\mu\left[(g_{t+1}-g_t)(2g'_{t+1}-g_t-g_{t+1})\right]$ we prove that for every point $x \in \{-1,1\}^n$, \equn{(g_{t+1}(x)-g_t(x))(2g'_{t+1}(x)-g_t(x)-g_{t+1}(x)) \leq 2\gamma^2 \chi_{a,\mu}(x)^2.} We first observe that $|g_{t+1}(x)-g_t(x)| = |P_1(g'_{t}(x)+ \gamma' \chi_{a,\mu}(x)) - P_1(g'_t(x))| \leq |\gamma' \chi_{a,\mu}(x)| = |\gamma \chi_{a,\mu}(x)|$ (a projection operation does not increase the distance).
Now $$|2 g'_{t+1}(x)-g_t(x)-g_{t+1}(x)| \leq |g'_{t+1}(x)-g_t(x)| + |(g'_{t+1}(x)-g_{t+1}(x)|. $$
The first part $|g'_{t+1}(x)-g_t(x)| = |\gamma' \chi_{a,\mu}(x) + g'_t(x)-g_t(x)| \leq |\gamma'\chi_{a,\mu}(x)|$ unless $g'_t(x)- g_t(x) \neq 0$ and $g'_t(x)- g_t(x)$ has the same sign as $\gamma' \chi_{a,\mu}(x)$. However, in this case $g_{t+1}(x)=g_t(x)$ and as a result $(g_{t+1}(x)-g_t(x))(2g'_{t+1}(x)-g_t(x)-g_{t+1}(x)) =0$. Similarly, $|g'_{t+1}(x)-g_{t+1}(x)| \leq |\gamma' \chi_{a,\mu}(x)| $ unless $g_{t+1}(x)=g_t(x)$. Altogether we obtain that $$(g_{t+1}(x)-g_t(x))(2g'_{t+1}(x)-g_t(x)-g_{t+1}(x)) \leq \max\{0, |\gamma \chi_{a,\mu}(x)| ( |\gamma' \chi_{a,\mu}(x)| + |\gamma' \chi_{a,\mu}(x)|)\} = 2\gamma^2 \chi_{a,\mu}(x)^2.$$ This implies that \equ{\E_\mu\left[(g_{t+1}-g_t)(2g'_{t+1}-g_t-g_{t+1})\right] \leq 2\gamma^2\E_\mu[\chi_{a,\mu}(x)^2] = 2\gamma^2 \label{eq:bound-gs}.}

By substituting equations (\ref{eq:bound-gradient}) and (\ref{eq:bound-gs}) into equation (\ref{eq:bound-step}), we obtain the claimed decrease in the potential function $$E(t) - E(t+1) \geq 4\gamma^2 - 2\gamma^2 = 2\gamma^2.$$

We now observe that $E(t)  = \E_\mu[(f-g_t)^2] + 2\E_\mu[(f-g_t)(g_t-g'_t)] \geq 0$ for all $t$. This follows from noting that for every $x$ and $f(x) \in \{-1,1\}$, either $f(x)-P_1(g'_t(x))$ and $P_1(g'_t(x))-g'_t(x))$ have the same sign or one of them equals zero. Therefore $\E_\mu[(f-g_t)(g_t-g'_t)] \geq 0$ (and, naturally, $\E_\mu[(f-g_t)^2] \geq 0$).  It is easy to see that $E(0) = 1$ and therefore this process will stop after at most $1/(2\gamma^2)$ steps.

The claim on the representation of $g_t$ output by the algorithm follows immediately from the definition of $g_t = P_1(g'_t)$ and $g'_t$ being a sum of $t$ $\mu$-Fourier basis functions multiplied by $\pm \gamma$.
\end{proof}


\eat{
\begin{remark}
The running time
\end{remark}

Mention extension to product distributions and include in the appendix.
}
\section{Applications to Learning DNF Expressions}
\label{sec:applications}
We now give several application of our approximating algorithms to the problem of learning DNF expressions in several models of learning. Our first application is a new algorithm for learning DNF expressions using membership queries over any product distribution. In the second application we show a simple algorithm for learning DNF expressions from random examples coming from a smoothed product distribution. In the third application we give a new and faster algorithm for learning MDNF over product distributions (from random examples alone).
We describe all the applications for (M)DNF expressions. However, by using the more general Theorem \ref{th:ltf-fourier-approx-bound} in place of Theorem \ref{th:dnf-fourier-approx-bound}, we immediately get that our algorithms can be also used to learn a broader set of concept classes which includes, for examples, (monotone) majorities of terms. Previous algorithms for the second and third applications rely strongly on the term-combining function being an OR.
\subsection{Learning with Membership Queries}
\eat{
To learn DNF with membership queries we can use the KM algorithm to obtain an estimate of $\hat{f}$ within $\eps/(5(2s+1))$ in $\|\cdot\|_\infty$ norm. Then we can use \ptfapprox for $d=n$ to obtain a bounded function $g$ such that $\|\hat{f} - \hat{g}\|_\infty \leq \eps/(5(2s+1))$ and hence $\E[|f-g|] \leq \eps$ (by Corollary \ref{cor:uniform-dnf-fourier-approx-bound}). Then $\pr[f \neq \sgn(g)] \leq \E[|f-g|] \leq \eps$.
}

An immediate application of Theorem \ref{th:ptf-reconstruct-product} together with the bound in Theorem \ref{th:dnf-fourier-approx-bound} and the EKM algorithm (Th.~\ref{th:ekm}) is a simple algorithm for learning DNF over any constant-bounded product distribution.
\begin{corollary}
\label{cor:learn-dnf-prod}
Let $c \in (0,1]$ be a constant. There exists a membership query algorithm \dnflearnprod that for every $c$-bounded $\mu$, efficiently PAC learns DNF expressions over $D_\mu$.
\end{corollary}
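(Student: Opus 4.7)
The plan is to combine the three main ingredients of the paper: the structural bound of Theorem~\ref{th:dnf-fourier-approx-bound}, the EKM algorithm of Theorem~\ref{th:ekm} for approximating the $\mu$-Fourier spectrum via membership queries, and the spectrum-matching construction \ptfapproxprod\ of Theorem~\ref{th:ptf-reconstruct-product}.

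Given a $c$-bounded $\mu$, accuracy $\eps$, and an $s$-term DNF target $f$ accessed through membership queries, set $\eps_0 = \eps/8$ and $d = \lfloor \log(s/\eps_0)/\log(2/(2-c)) \rfloor$, so that Theorem~\ref{th:dnf-fourier-approx-bound} applies with additive error term $4\eps_0 = \eps/2$. Let $M = 2(2-c)^{d/2} s + 1$; since $c$ is a constant, $d = O(\log(s/\eps))$ and $(2-c)^{d/2} \leq (s/\eps)^{O(1)}$, hence $M = \poly(s,1/\eps)$. Choose $\gamma = \eps/(10 M)$, so that $5M\gamma \leq \eps/2$. First invoke the EKM algorithm (Theorem~\ref{th:ekm}) with parameters $\gamma$ and confidence $\delta/2$ to obtain a succinctly-represented vector $\tilde{f}_\mu$ with $\|\hat{f}_\mu - \tilde{f}_\mu\|_\infty \leq \gamma$ and $\|\tilde{f}_\mu\|_0 \leq 4/\gamma^2$. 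Restricting $\tilde{f}_\mu$ to coordinates in $B_d$ yields $\tilde{f}_\mu(B_d)$ satisfying the hypothesis of Theorem~\ref{th:ptf-reconstruct-product}. Run \ptfapproxprod\ on $(\mu,\gamma,\delta/2,d,\tilde{f}_\mu(B_d))$ to obtain a bounded function $g:\pmi^n\to[-1,1]$ with $\|\hat{f}_\mu(B_d) - \hat{g}_\mu(B_d)\|_\infty \leq 5\gamma$. Output the hypothesis $h(x) = \sgn(g(x))$.

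Correctness: by Theorem~\ref{th:dnf-fourier-approx-bound},
\[
\E_\mu[|f(x)-g(x)|] \;\leq\; M \cdot 5\gamma + 4\eps_0 \;\leq\; \eps/2 + \eps/2 \;=\; \eps.
\]
Since $f(x) \in \{-1,1\}$ and $g(x) \in [-1,1]$, whenever $\sgn(g(x)) \neq f(x)$ we have $|f(x)-g(x)| \geq 1$; consequently
\[
\pr_\mu[h(x)\neq f(x)] \;\leq\; \E_\mu[|f(x)-g(x)|] \;\leq\; \eps.
\]
A union bound over the two randomized subroutines yields overall success probability at least $1-\delta$. The running time is $\poly(n,1/\gamma,\log(1/\delta)) = \poly(n,s,1/\eps,\log(1/\delta))$, so the algorithm is efficient in the PAC sense. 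I do not foresee a serious obstacle here: the statement is essentially immediate from plugging the structural theorem into the constructive one, and the only point deserving a line of justification is that both $d$ and $M$ are polynomially bounded whenever $c$ is a constant, which follows since $\log(2/(2-c))$ is then bounded away from zero.
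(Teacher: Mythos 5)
Your proof is correct and follows essentially the same two-phase scheme as the paper: collect $\mu$-Fourier coefficients with the EKM algorithm, feed them to \ptfapproxprod, apply Theorem~\ref{th:dnf-fourier-approx-bound}, and output $\sgn(g)$. The only differences are cosmetic choices of constants ($\eps/8$ and $\gamma = \eps/(10M)$ versus the paper's $\eps' = \eps/9$ and $\gamma = \eps'/M$) and your explicit confidence parameter $\delta/2$ per subroutine where the paper simply uses $1/4$ to meet the $1/2$ threshold in its PAC definition.
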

\begin{proof}
Let $\eps' = \eps/9$ and, as defined in Th.~\ref{th:dnf-fourier-approx-bound}, let $d  = \lfloor \log{(s/\eps')}/\log{(2/(2-c))} \rfloor$ and $$\gamma = \eps'/(2 (2-c)^{d/2} s + 1)
= \Omega\left((\eps/s)^{(1/\log{(2/(2-c))} +1)/2 }\right) .$$
\dnflearnprod consists of two phases:
\begin{enumerate}

\item {\bf Collect $\gamma$-approximations to all degree-$d$ $\mu$-Fourier coefficients}. In this step we run the EKM algorithm for $f$ with parameters, $\theta=\gamma$, $\delta =1/4$ and $\mu$ to obtain a succinctly-represented $\tilde{f}_\mu(B_d)$ such that $\|\tilde{f}_\mu(B_d) - \tilde{g}_\mu(B_d)\|_\infty \leq \gamma$ (EKM returns the complete $\tilde{f}_\mu$ but we discard coefficients with degree higher than $d$).

\item {\bf Construct a bounded $g$ with the given $\mu$-Fourier spectrum}. In this step we run \ptfapproxprod on $\tilde{f}_\mu(B_d)$ with parameters $d$, $\gamma$, $\mu$ and $\delta =1/4$ to construct a bounded function $g$ such that $\|\hat{f}_\mu(B_d) - \hat{g}_\mu(B_d)\|_\infty \leq 5\gamma = 5\eps'/(2 (2-c)^{d/2} s + 1)$. Note that this step requires no access to membership queries or random examples of $f$.
\end{enumerate}
We return $\sgn(g(x))$ as our hypothesis. Overall, if both steps are successful (which happens with probability at least $1/2$) then, according to Theorem \ref{th:dnf-fourier-approx-bound}, $$\E_\mu[|f-g|] \leq \|\hat{f}_\mu(B_d) - \hat{g}_\mu(B_d)\|_\infty \cdot (2 (2-c)^{d/2} s + 1) +4 \eps' = 5\gamma \cdot (2 (2-c)^{d/2} s + 1) + 4\eps' = 9\eps' = \eps. $$ This implies $\pr_\mu[f \neq \sgn(g)] \leq \E_\mu[|f-g|] \leq \eps$.

The running time of both phases of \dnflearnprod is polynomial in $n$, and $1/\gamma$, which for any constant $c\in (0,1]$, is polynomial in $n \cdot s /\eps$.
\end{proof}

As noted in the proof, the only part of our algorithm that uses membership queries is the phase that collects Fourier coefficients of logarithmic degree. This step can also be performed using weaker forms of access to the target function, such as extended statistical queries of \citet{BshoutyFeldman:02} or examples coming from a random walk on a hypercube \cite{BshoutyMOS:05}. Hence our algorithm can be adapted to those models in a straightforward way.

\subsection{Smoothed Analysis of Learning DNF over Product Distributions}
\label{sec:smoothed}
We now describe how \ptfapproxprod can be used in the context of smoothed analysis of learning DNF over product distributions introduced by \citet{KalaiST:09}. We start with a brief description of the model.

\subsubsection{Learning from Smoothed Product Distributions}
Motivated by the seminal model of smoothed analysis by \citet{SpielmanTeng:04}, \citet{KalaiST:09} defined learning a concept class $C$ with respect to smoothed product distributions as follows. The model measures the complexity of a learning algorithm with respect to a product distribution $D_\mu$ where $\mu$ is ``perturbed" randomly. More formally, $\mu$ is chosen uniformly at random from a cube $\bar{\mu} + [-c,c]^n$ for a $2c$-bounded $\bar{\mu}$. A learning algorithm in this model must, for every $\bar{\mu}$ and $f\in C$, PAC learn $f$ over $D_\mu$ with high probability over the choice of $\mu$.
\begin{definition}[\citealt{KalaiST:09}]
Let $C$ be a concept class. An algorithm $\A$ is said to learn $C$ over smoothed product distributions if for every constant $c \in (0,1/2]$, $f\in C$, $\eps,\delta > 0$, and any $2c$-bounded $\bar{\mu}$, given access to $\exoracle{D_\mu}{f}$ for a randomly and uniformly chosen $\mu \in \bar{\mu}+[-c,c]^n$, with probability at least $1-\delta$, $\A$ outputs a hypothesis $h$, $\eps$-close to $f$ relative to $D_\mu$. The probability here is taken with respect to the random choice of $\mu$, choice of random samples from $D_\mu$ and any internal randomization of $\A$. $\A$ is said to learn efficiently if its running time is upper-bounded by a polynomial in $n/(\eps \cdot \delta)$ (and the size $s$ of $f$ if $C$ is parameterized) where the degree of the polynomial is allowed to depend on $c$.
\end{definition}

\noindent {\bf Feature Finding Algorithm.}
A key insight in the results of \citet{KalaiST:09} is that if a bounded function $f$ has a low-degree significant $\bar{\mu}$-Fourier coefficient $\hat{f}_{\bar{\mu}}(a)$, then after the perturbation $f$ will have significant $\mu$-Fourier coefficients for all $b \leq a$ (here $b \leq a$ means $b_i \leq a_i$ for all $i\in [n]$). This insight leads to a simple method for finding all the significant $\mu$-Fourier coefficients of degree $d$ in time polynomial in $2^d$ instead of $n^d$ required by the Low Degree algorithm.
\begin{theorem}[Greedy Feature Construction (GFC)\citep{KalaiST:09}]
Let $c \in (0,1/2]$ be a constant. There exists an algorithm that for every $f:\pmi^n \rightarrow [-1,1]$, $d \in [n]$, $\theta,\delta > 0$, $2c$-bounded $\bar{\mu}$, given access to $\exoracle{D_\mu}{f}$ for a randomly and uniformly chosen $\mu \in \bar{\mu}+[-c,c]^n$, with probability at least $1-\delta$, outputs a succinctly-represented vector $\tilde{f}(B_d)$ such that $\|\hat{f}_\mu(B_d) - \tilde{f}_\mu(B_d)\|_\infty \leq \theta$ and $\|\tilde{f}_\mu(B_d)\|_0 \leq 4/\theta^2$. The algorithm runs in time $O((n \cdot 2^d  /(\theta \cdot \delta))^{k(c)})$ for some constant $k(c)$ which depends only on $c$.
\end{theorem}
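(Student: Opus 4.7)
The plan is to build the set of significant low-degree $\mu$-Fourier indices by breadth-first search over the subset lattice, exploiting a perturbation-based monotonicity property that holds with high probability over the random $\mu$. Expanding $x_i - \bar{\mu}_i = (x_i - \mu_i) + (\mu_i - \bar{\mu}_i)$ and using orthogonality of the $\mu$-Fourier basis yields the triangular change-of-basis relation
\[
\hat{f}_\mu(b) \;=\; \sum_{c \ge b} M_{b,c}(\mu)\cdot \hat{f}_{\bar{\mu}}(c), \qquad
M_{b,c}(\mu) \;=\; \prod_{i \in c\setminus b}\frac{\mu_i - \bar{\mu}_i}{\sqrt{1-\bar{\mu}_i^2}}\ \cdot\ \prod_{i \in b}\sqrt{\frac{1-\mu_i^2}{1-\bar{\mu}_i^2}},
\]
so $\hat{f}_\mu(b)$, viewed as a function of $\tau = \mu - \bar{\mu}$, is (up to the second product, which is bounded above and below by functions of $c$ alone, using $2c$-boundedness of $\bar{\mu}$) a multilinear polynomial whose monomial $\prod_{i \in a\setminus b}\tau_i$ has coefficient proportional to $\hat{f}_{\bar{\mu}}(a)$.

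The structural heart of the argument is the following claim: if $|\hat{f}_{\bar{\mu}}(a)| \ge \alpha$ for some $a \in B_d$, then with probability $\ge 1 - \delta/2$ over the uniform choice of $\mu \in \bar{\mu}+[-c,c]^n$, $|\hat{f}_\mu(b)| \ge \gamma$ simultaneously for every $b \le a$, where $\gamma = \alpha \cdot (c/C)^{O(d)} \cdot (\delta/2^d)^{O(1)}$ for an absolute constant $C$. The proof isolates the $\prod_{i \in a\setminus b}\tau_i$ monomial in the polynomial above and applies a Carbery--Wright (or elementary Remez-type) anti-concentration inequality on $[-c,c]^{n-\|b\|_0}$: the contributions from the high-degree tail of $\hat{f}_{\bar{\mu}}$ are controlled by Parseval together with the fact that $|\tau_i|\le c < 1$ shrinks high-degree monomials in $L_2$, while the degree-$\le d$ contributions form a bounded-degree polynomial to which anti-concentration directly applies. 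A union bound over the $\le 2^d$ subsets $b \le a$ gives the simultaneous statement.

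With this in hand the algorithm is transparent. Initialize $S = \{\bar{0}\}$, and for $t = 0,1,\ldots,d$: for each $a \in S$ draw $m = 2^{O(d)}\gamma^{-2}\log(1/\delta')$ independent samples from $\exoracle{D_\mu}{f}$ and form the empirical estimate $\tilde{f}_\mu(a)$ of $\E_\mu[f(x)\phi_{\mu,a}(x)]$, which is accurate to $\gamma/4$ by Hoeffding, using that $|\phi_{\mu,a}(x)| \le (c(2-c))^{-d/2} = 2^{O(d)}$ for $c$-bounded $\mu$. Discard $a$ if $|\tilde{f}_\mu(a)| < 3\gamma/4$; otherwise add the $n$ one-bit supersets of $a$ to the next round's $S$. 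After depth $d$, re-estimate every surviving index to accuracy $\theta$ and output those with $|\tilde{f}_\mu(a)| \ge \theta$ together with their estimates.

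Two invariants close the analysis. By Parseval, at most $4/\gamma^2$ indices can ever survive the $3\gamma/4$ threshold on an accurate estimate, so the entire candidate set visited by the BFS has size $O(n/\gamma^2)$. Combined with the structural claim and its converse---if $|\hat{f}_{\bar{\mu}}(c)| < \alpha$ for every $c \ge a$, the change-of-basis formula plus boundedness of $M_{a,c}(\mu)$ forces $|\hat{f}_\mu(a)| < \theta$---every $a \in B_d$ with $|\hat{f}_\mu(a)| \ge \theta$ has some ancestor $c \ge a$ with $|\hat{f}_{\bar{\mu}}(c)| \ge \alpha$, so every $b \le a$ is retained throughout the search and $a$ itself is discovered. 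A union bound over the $O(n/\gamma^2)$ estimation events at $\delta' = \delta\gamma^2/(Cn)$ yields total failure at most $\delta$ and running time $(n\cdot 2^d/(\theta\delta))^{k(c)}$ for a constant $k(c)$ depending only on $c$. The main obstacle is the structural claim: lower-order monomials in $\tau$ can cancel the $\hat{f}_{\bar{\mu}}(a)$-term on a set of positive measure, and one needs anti-concentration, combined with a truncation argument to handle the unbounded-degree tail of $\hat{f}_{\bar{\mu}}$, to confine such cancellations to a $\delta$-fraction of $\mu$, with all attendant losses absorbed into the $c$-dependent exponent $k(c)$.
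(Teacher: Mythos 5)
The paper does not prove this theorem; it states it as a black-box citation to \citet{KalaiST:09}, and the only thing the paper offers in support is the one-sentence high-level insight preceding the statement (``if $f$ has a low-degree significant $\bar{\mu}$-Fourier coefficient $\hat{f}_{\bar{\mu}}(a)$, then after the perturbation $f$ will have significant $\mu$-Fourier coefficients for all $b\le a$''). Your plan --- change of basis from $\bar{\mu}$ to $\mu$, a perturbation-persistence lemma, breadth-first search over the subset lattice, Parseval to bound the frontier, and a union bound --- is exactly that insight unpacked, and your change-of-basis formula $\hat{f}_\mu(b)=\sum_{c\ge b}M_{b,c}(\mu)\hat{f}_{\bar\mu}(c)$ is correct. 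So the \emph{shape} of the argument matches what the paper gestures at.

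That said, two of the load-bearing steps are asserted rather than established, and neither is innocuous. First, the anti-concentration of $\hat{f}_{\bar\mu+\tau}(b)$ as a function of $\tau$. You invoke Carbery--Wright, but the relevant polynomial has degree up to $n-\|b\|_0$, not $d$: \emph{every} $c\ge b$ contributes a monomial $\prod_{i\in c\setminus b}\tau_i$, so the degree-$d$ anti-concentration bounds do not apply off the shelf. You note that a truncation is needed to handle the tail, and flag this as ``the main obstacle,'' but you do not resolve it; given that this is precisely where the $\mathrm{poly}(2^d,1/\theta,1/\delta)$ dependence is won, leaving it as a pointer to Carbery--Wright plus a truncation is a real gap, not a routine omission. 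Second, your ``converse'' claim --- that $|\hat f_{\bar\mu}(c)|<\alpha$ for all $c\ge a$ forces $|\hat f_\mu(a)|<\theta$ via ``boundedness of $M_{a,c}(\mu)$'' --- is false as stated without a degree truncation on $c$: the sum $\sum_{c\ge a}M_{a,c}\hat f_{\bar\mu}(c)$ has exponentially many terms, $\sum_c M_{a,c}^2$ can be as large as $(5/4)^n$ for $c$-bounded $\bar\mu$, and (e.g.) Majority with $\bar\mu=\mathbf 0$, $\mu=(1/2,\dots,1/2)$ has all $\hat f_{\bar\mu}(c)=O(n^{-1/2})$ yet $\hat f_\mu(\emptyset)\approx 1$. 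What is actually true, and what the BFS needs, is a version restricted to $|c|\le d'$ for a controlled $d'$, together with a separate argument that the tail $\sum_{|c|>d'}M_{a,c}\hat f_{\bar\mu}(c)$ is negligible (this cannot be done by Cauchy--Schwarz alone and requires using that $\hat f_{\bar\mu+\tau}(a)$ is bounded by $1$ pointwise, or a more refined estimate). You would need to state and prove that truncated version for the reduction from ``$|\hat f_\mu(a)|\ge\theta$'' to ``some low-degree $|\hat f_{\bar\mu}(c)|\ge\alpha$'' to go through. Finally, a small bookkeeping point: the structural claim must be unioned over all $a\in B_{d'}$ with $|\hat f_{\bar\mu}(a)|\ge\alpha$ (at most $1/\alpha^2$ of them by Parseval), not just one; this is easy but currently implicit.
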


\subsubsection{Application of \ptfapproxprod}
The Greedy Feature Construction algorithm gives an efficient algorithm for collecting $\mu$-Fourier coefficients of logarithmic degree. The application of \ptfapproxprod in this setting is now straightforward. All that needs to be done is to replace the EKM algorithm in the coefficient collection phase of \dnflearnprod (Cor.~\ref{cor:learn-dnf-prod}) with the GFC algorithm. The coefficient collection phase of \dnflearnprod requires only coefficients of logarithmic degree in the learning parameters and therefore the resulting combination runs in polynomial time (the approximator construction phase is unchanged and still uses the EKM algorithm). Thereby we obtain a new simple proof of the following theorem from \citep{KalaiST:09}.
\begin{theorem}[\citealt{KalaiST:09}]
\label{th:smooth-learn-dnf}
DNF expressions are PAC learnable efficiently over smoothed product distributions.
\end{theorem}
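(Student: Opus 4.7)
The plan is to essentially re-run the two-phase algorithm \dnflearnprod from the proof of Corollary \ref{cor:learn-dnf-prod}, replacing only its first (coefficient-collection) phase: instead of invoking the EKM algorithm, which requires membership queries, I would invoke the GFC algorithm, which needs only random examples but exploits the smoothing of $\mu$ to find the relevant coefficients in time polynomial in $2^d$ rather than $n^d$. The second phase, which calls \ptfapproxprod on a vector of approximate coefficients, requires no access to $f$ at all and can be carried over verbatim.

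Concretely, for an $s$-term DNF $f$ and accuracy $\eps$, I would set $\eps' = \eps/9$, the degree cutoff $d = \lfloor \log(s/\eps')/\log(2/(2-c)) \rfloor$, and the coefficient accuracy $\gamma = \eps'/(2(2-c)^{d/2} s + 1)$, exactly as in Corollary \ref{cor:learn-dnf-prod}. Since $c$ is a positive constant, $d = O(\log(s/\eps))$, so $2^d$ and $1/\gamma$ are both $\poly(s/\eps)$. Because $\bar\mu$ is $2c$-bounded and $\mu \in \bar\mu + [-c,c]^n$, the perturbed distribution is $c$-bounded, so Theorem \ref{th:dnf-fourier-approx-bound} remains available for this $\mu$. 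I would then run GFC on $f$ with parameters $d$, $\theta=\gamma$, confidence $\delta/2$, and $\bar\mu$, obtaining (with probability $\ge 1-\delta/2$ over the choice of $\mu$ and the samples) a succinctly represented $\tilde f_\mu(B_d)$ with $\|\hat f_\mu(B_d) - \tilde f_\mu(B_d)\|_\infty \le \gamma$ and $\|\tilde f_\mu(B_d)\|_0 \le 4/\gamma^2$; the running time is $O((n \cdot 2^d/(\gamma \delta))^{k(c)})$, which is polynomial. Feeding $\tilde f_\mu(B_d)$ into \ptfapproxprod with parameters $d,\gamma,\mu$ and confidence $\delta/2$ produces, in polynomial time and with probability $\ge 1-\delta/2$, a bounded $g$ satisfying $\|\hat f_\mu(B_d) - \hat g_\mu(B_d)\|_\infty \le 5\gamma$. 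I would output $h = \sgn(g)$.

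The error bound then follows directly from Theorem \ref{th:dnf-fourier-approx-bound}:
\[
\E_\mu[|f - g|] \le (2(2-c)^{d/2}s + 1)\cdot 5\gamma + 4\eps' = 5\eps' + 4\eps' = \eps,
\]
so $\pr_\mu[f \ne \sgn(g)] \le \eps$, and a union bound gives overall failure probability at most $\delta$. I do not anticipate a real obstacle: the substantive work has already been done in Theorem \ref{th:dnf-fourier-approx-bound} (structural bound) and Theorem \ref{th:ptf-reconstruct-product} (spectrum reconstruction). The two mild points worth verifying are that GFC's output meets exactly the input interface assumed by \ptfapproxprod (it does: both use the same succinct $\|\cdot\|_\infty$-accurate format), and that the $2^d$ factor in the GFC runtime stays polynomial for $d$ chosen as above, which is immediate because $d$ is logarithmic in the learning parameters.
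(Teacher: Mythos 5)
Your proposal is correct and matches the paper's own proof essentially verbatim: the paper also simply swaps the EKM-based coefficient-collection phase of \dnflearnprod for the GFC algorithm and reuses \ptfapproxprod unchanged, noting that $d = O(\log(s/\eps))$ keeps the $2^d$ factor polynomial. Your version spells out the parameter choices and the $c$-boundedness of the perturbed $\mu$, but the underlying argument is identical.
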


\subsection{Learning Monotone DNF}
\label{sec:learn-mdnf}
We now describe our algorithm for learning monotone $s$-term DNF from random examples alone. For simplicity, we describe it for the uniform distribution, but all the ingredients that we use have their product distribution versions and hence the generalization is straightforward (we describe it in Appendix \ref{sec:app-proofs}). As pointed out earlier, our algorithm is based on Servedio's algorithm for learning monotone DNF \citep{Servedio:04mondnf}. The main idea of his algorithm is to restrict learning to influential variables alone (which for a monotone function can be efficiently identified) and then run the Low Degree algorithm \ref{th:low-degree} to approximate all the Fourier coefficients of low degree on influential variables. The sign of the resulting low-degree polynomial $p(x)$ is then used as a hypothesis. The degree that is known to be sufficient for such approximation to work was derived using a Fourier concentration bound by \citet{Mansour:95} and \citet{LinialMN:93} and equals $20 \cdot \log{(s/\eps)} \cdot \log{(1/\eps)}$.

In our algorithm, instead of just taking the sign of $p(x)$ as the hypothesis, we use \ptfapprox to produce a bounded function with the same Fourier coefficients as $p(x)$. The advantage of this approach is that the degree bound required to achieve $\eps$-accuracy using our approach is reduced to $\log{(s/\eps)}+O(1)$ (and is also significantly easier to prove than the Switching Lemma-based bound of \citet{Mansour:95}). Further, the accuracy estimation in our algorithm does not depend on $n$ the number of sufficiently influential variables does not depend on $n$. As a consequence our algorithm is attribute-efficient.

Following \citet{Servedio:04mondnf}, we rely on a well-known connection between the influence of a variable and Fourier coefficients that include that variable. Formally, for a function $f:\pmi^n \rightarrow \pmi$ and $i\in [n]$ let $f_{i,1}(x)$ and $f_{i,-1}(x)$ denote $f(x)$ with bit $i$ of the input set to $1$ and $-1$, respectively. The influence of variable $i$  over distribution $D$ is defined as $I_{D,i}(f) = \pr_D[f_{i,1}(x) \neq f_{i,-1}(x)]$. We use $I_i(f)$ to denote the influence over the uniform distribution. Let $S_i = \{a \in \zon \cond a_i=1\}$. \citet{KahnKL:88} have shown that for every $i\in[n]$, \equ{I_i(f) = \sum_{a \in S_i} \hat{f}(a)^2 = \|\hat{f}(S_i)\|_2^2. \label{eq:kkl}}
The crucial use of monotonicity is that for any monotone $f$,  $I_{D,i}(f) = (\E_D[f_{i,1}(x)] - \E_D[f_{i,-1}(x)])/2$ and hence one can estimate $\|\hat{f}(S_i)\|_2^2$ using random uniform examples of $f$. We now describe our algorithm for learning monotone DNF over the uniform distribution more formally.
\begin{theorem}
\label{th:learn-mdnf}
There exists an algorithm that PAC learns $s$-term monotone DNF expressions over the uniform distribution to accuracy $\eps$ in time $\tilde{O}(n\cdot (s \cdot \log{(s/\eps)})^{O(\log{(s/\eps)})})$.
\end{theorem}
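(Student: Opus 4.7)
The plan is to adapt Servedio's framework of first isolating the ``influential'' variables and then learning from low-degree Fourier coefficients, but with one critical change: instead of taking the sign of an estimated low-degree polynomial, I invoke \ptfapprox on the estimated coefficients. Combined with Theorem~\ref{th:dnf-fourier-approx-bound}, this brings the required degree down from Mansour's $\Theta(\log(s/\eps)\log(1/\eps))$ to $\Theta(\log(s/\eps))$ and removes the direct dependence on $n$ in the sample complexity. I will set $\eps' = \eps/9$, $d = \lfloor\log(s/\eps')\rfloor + O(1)$, $\gamma = \eps'/(2s+1) = \Theta(\eps/s)$, and an influence threshold $\tau = \gamma^2/2$.

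The algorithm then proceeds in three phases. First, using the monotone identity $\hat{f}(\{i\}) = I_i(f)$ (which follows from $\E[f \cdot x_i] = (\E[f_{i,1}] - \E[f_{i,-1}])/2 = I_i(f)$ for monotone $f$), I estimate every $I_i(f)$ to accuracy $\tau/4$ from a common set of $\tilde{O}(1/\tau^2)$ random examples and let $V$ be the set of variables whose estimated influence exceeds $\tau$. Second, I bound $|V|$ by splitting the terms of $f$ into \emph{short} ($\leq d$ variables) and \emph{long} ($>d$) parts: a variable occurring only in long terms triggers a flip only when some long term containing it has its other $\geq d$ variables all on, so $I_i(f) \leq s \cdot 2^{-d} < \tau/2$ for the chosen parameters, and thus (with high probability) no such variable enters $V$. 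Consequently every $i \in V$ lies in a short term, giving $|V| \leq s \cdot d = O(s \log(s/\eps))$. Third, I estimate $\hat{f}(a) = \E[f(x)\chi_a(x)]$ to accuracy $\gamma$ for every $a \in B_d$ with $\mathrm{supp}(a)\subseteq V$ and set $\tilde{f}(a) = 0$ otherwise. For any zeroed index $a$ there is an $i \in \mathrm{supp}(a)\setminus V$, so by equation~(\ref{eq:kkl}), $\hat{f}(a)^2 \leq I_i(f) < 2\tau \leq \gamma^2$, and hence $\|\hat{f}(B_d) - \tilde{f}(B_d)\|_\infty \leq \gamma$. I then feed $\tilde{f}(B_d)$ into \ptfapprox to obtain a bounded $g$ with $\|\hat{f}(B_d) - \hat{g}(B_d)\|_\infty \leq 5\gamma$, and return $\sgn(g)$.

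For correctness, Theorem~\ref{th:dnf-fourier-approx-bound} with $c=1$ gives $\E[|f-g|] \leq (2s+1)\cdot 5\gamma + 4\eps' = 9\eps' = \eps$, so $\pr[\sgn(g)\neq f] \leq \eps$. The runtime is dominated by Phase~3, which estimates $|V|^d = (s\log(s/\eps))^{O(\log(s/\eps))}$ coefficients, each via an empirical average over $\poly(s/\eps,\log(1/\delta))$ examples; Phase~1 adds only $\tilde{O}(n/\tau^2) = \tilde{O}(n(s/\eps)^4)$ work over all $n$ variables. Together these yield the claimed bound of $\tilde{O}(n \cdot (s\log(s/\eps))^{O(\log(s/\eps))})$.

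The main obstacle is the joint calibration of $\tau$ and $d$ in Phases~1--2: one needs $\tau$ small enough that $\sqrt{2\tau} \leq \gamma$ (so Fourier coefficients whose support strays outside $V$ can be treated as zero) yet large enough that $s \cdot 2^{-d} < \tau/2$ (so long-term-only variables remain out of $V$). Both constraints collapse to $d = \Theta(\log(s/\eps))$ with compatible constants, and this tight fit is exactly what drives the improved exponent in the final runtime.
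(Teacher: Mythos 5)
Your proposal follows the paper's approach closely: identify influential variables via monotonicity and the Kahn--Kalai--Linial identity (eq.~\ref{eq:kkl}), run a low-degree estimation restricted to those variables, feed the estimates into \ptfapprox, and invoke Theorem~\ref{th:dnf-fourier-approx-bound} for correctness. The use of $\hat f(\{i\}) = I_i(f)$ to estimate influences from random samples is equivalent to the paper's use of $I_i(f) = (\E[f_{i,1}]-\E[f_{i,-1}])/2$. The constants in your thresholds ($\tau = \gamma^2/2$, accuracy $\tau/4$) differ from the paper's ($\gamma^2$, accuracy $\gamma^2/3$) but work just as well.

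There is, however, a genuine calibration slip: you use a \emph{single} parameter $d$ both as the degree cutoff for Theorem~\ref{th:dnf-fourier-approx-bound} and as the term-length threshold in the ``only-long-terms'' argument, and you set $d = \lfloor \log(s/\eps')\rfloor + O(1)$. With that choice, $s \cdot 2^{-d} = \Theta(\eps')$, whereas $\tau/2 = \gamma^2/4 = \Theta\bigl((\eps'/s)^2\bigr)$; the inequality $s\cdot 2^{-d} < \tau/2$ fails badly. Satisfying it would require $2^d \gtrsim s/\gamma^2 = \Theta(s^3/\eps'^2)$, i.e.\ $d \geq 3\log s + 2\log(1/\eps') + O(1)$. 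The paper avoids this tension by \emph{decoupling} the two roles: it sets the degree to $d = \lfloor\log(s/\eps')\rfloor$ (the smallest degree for which Theorem~\ref{th:dnf-fourier-approx-bound} applies) but uses a separate, larger term-length cutoff $d' = \log(3s/\gamma^2)$ when bounding the number of surviving variables $|M| \leq s \cdot d'$. Your concluding remark that ``both constraints collapse to $d = \Theta(\log(s/\eps))$'' is essentially right and suggests you sensed this issue, but as written the $+O(1)$ is an additive constant, which does not suffice. The fix is easy (either inflate $d$ by a constant factor, or decouple the thresholds as the paper does), and since both quantities are $\Theta(\log(s/\eps))$ the claimed runtime bound $\tilde{O}\bigl(n\cdot (s\log(s/\eps))^{O(\log(s/\eps))}\bigr)$ is preserved either way; the paper's decoupled choice just gives a slightly smaller constant in the exponent.
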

\begin{proof}
Our algorithm is based on the same two phases as \dnflearnprod in Corollary \ref{cor:learn-dnf-prod}. Hence we set $\eps' = \eps/9$, $d  = \lfloor \log{(s/\eps')} \rfloor$ and $\gamma = \eps'/(2s+1)$.

The goal of the first phase of the algorithm is to collect $\gamma$-approximations to degree-$d$ Fourier coefficients of $f$. We do this by first finding the influential variables and then using a low-degree algorithm restricted to the influential variables.

Using equation (\ref{eq:kkl}), we can conclude that if for some variable $i$, $I_i(f) = \|\hat{f}(S_i)\|_2^2 \leq \gamma^2$, then there are no Fourier coefficients of $f$, that include variable $i$ and are greater in their magnitude than $\gamma$. We can therefore eliminate variable $i$, that is approximate all of Fourier coefficients in $S_i$ by $0$. Also, as we mentioned before, $I_i(f)$ can be estimated from random examples of $f$. We will use an estimate to accuracy $\gamma^2/3$ and exclude variable $i$ if the estimate is lower than $2\gamma^2/3$ (the straightforward details of the required confidence bounds appear in the more detailed and general proof of Theorem \ref{th:learn-mdnf-prod}).

We argue that this process will eliminate all but at most $s \cdot \log{(3s/\gamma^{2})}$ variables. This, follows from the fact that if a variable $i$ appears only in terms of length greater than $\log{(3s/\gamma^{2})}$ then it cannot be influential enough to survive the elimination condition. Over the uniform distribution, each term of length greater than $\log{(3s/\gamma^{2})}$ equals 1 with probability at most $\gamma^2/(3s)$. The value $f_{i,1}(x)$ differs from $f_{i,-1}(x)$ only if $x$ is accepted by a term that includes variable $i$. There are at most $s$ terms and therefore (for a variable $i$ that appears only in terms of length $\log{(3s/\gamma^2)}$) $$(\E[f_{i,1}(x)] - \E[f_{i,-1}(x)])/2 < s \cdot \gamma^2/(3s) = \gamma^2/3.$$ Consequently, the influence of such variable $i$ cannot be within $\gamma^2/3$ of $3\gamma^2/3$ (required to survive the elimination). Therefore at the end of the first step we will end up with variables only from terms of length at most  $\log{(3s/\gamma^{2})}$. Hence there will be at most  $s \cdot \log{(3s/\gamma^{2})}$ variables left. Let $M$ denote the set of the remaining (influential) variables.

In the second step of this phase we run the low-degree algorithm for degree $d$ and $\theta = \gamma = \eps'/(2s+1)$ restricted to the variables in $M$, and let $\tilde{f}(B_d)$ be the resulting vector of approximate Fourier coefficients (the coefficients with variables outside of $M$ are 0). By Theorem \ref{th:low-degree} and the property of our influential variables $\|\hat{f}(B_d) - \tilde{f}(B_d)\|_\infty \leq \gamma.$

We can now construct an approximating function in the same way as we did in \dnflearnprod (Cor.~\ref{cor:learn-dnf-prod}). Namely, in the third step of the algorithm we run \ptfapprox on $\tilde{f}(B_d)$ to obtain a bounded function $g$ such that $\|\hat{f}(B_d)-\hat{g}(B_d)\|_\infty \leq 5 \gamma = 5\eps'/(2s+1)$. Then, by Theorem \ref{th:dnf-fourier-approx-bound}, $$\E[|f-g|] \leq (2s+1) \|\hat{f}(B_d)-\hat{g}(B_d)\|_\infty + 4\eps' \leq (2s+1)\cdot 5\eps'/(2s+1) + 4\eps' = 9\eps' = \eps. $$ Hence $\pr[\sgn(g) \neq f]\leq \eps$.

To analyze the running time of our algorithm we note that both the first and the third steps can be done in $\tilde{O}(n) \cdot \poly(s/\eps)$ time. According to Theorem \ref{th:low-degree}, the second step can be done in $n \cdot |M|^d \cdot \poly(|M|/\gamma) = n \cdot (s \cdot \log{(s/\eps)})^{O(\log{(s/\eps)})}$ time steps. Altogether, we obtain the claimed bound on the running time.
\end{proof}

A corollary of our running time bound is that for $s$ and $\eps$ such that $s/\eps = 2^{\sqrt{\log{n}}}$, $s$-term monotone DNF are learnable to accuracy $\eps$ in polynomial time. Servedio's algorithm is only guaranteed to efficiently learn $2^{\sqrt{\log{n}}}$-term MDNF to constant accuracy.

We remark that the bound on running time can be simplified for monotone $s$-term $k$-DNF expressions. Specifically, we will obtain an algorithm running in $(s \cdot k )^{O(k)} \cdot (n/\eps)^{O(1)}$ time. This algorithm can be used to obtain fully-polynomial learning algorithms for monotone $2^{\sqrt{\log{n}}}$-term $\sqrt{\log{n}}$-DNF and other subclasses of MDNF expressions for which no fully-polynomial learning algorithms were known.

In Appendix \ref{sec:app-proofs} we give the straightforward generalization of our learning algorithm to product distributions and prove the following theorem.
\begin{theorem}
\label{th:learn-mdnf-prod}
For any constant $c\in (0,1]$ there exists an algorithm \mdnflearnprod that PAC learns $s$-term monotone DNF expressions over all $c$-bounded product distributions to accuracy $\eps$ in time $\tilde{O}(n\cdot  (s \cdot \log{(s/\eps)})^{O(\log{(s/\eps)})})$.
\end{theorem}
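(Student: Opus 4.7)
The plan is to follow the same three-phase template as the uniform-distribution proof of Theorem~\ref{th:learn-mdnf}: (i) prune irrelevant variables using empirical estimates of influences, (ii) invoke the Low Degree Algorithm (Th.~\ref{th:low-degree}) on the surviving variables to collect all $\mu$-Fourier coefficients up to degree $d$ within $\gamma$, and (iii) feed these estimates to \ptfapproxprod to construct a bounded $g$ whose low-degree $\mu$-Fourier spectrum is $5\gamma$-close to that of $f$. Then $\sgn(g)$ is the hypothesis, with accuracy controlled by Theorem~\ref{th:dnf-fourier-approx-bound}. I would set $\eps'=\eps/9$, $d=\lfloor \log(s/\eps')/\log(2/(2-c))\rfloor=O(\log(s/\eps))$ and $\gamma=\eps'/(2(2-c)^{d/2}s+1)=\Theta((\eps/s)^{O(1)})$, exactly as in Corollary~\ref{cor:learn-dnf-prod}.

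The step that requires real work is the variable-elimination phase, since the clean identity $I_i(f)=\|\hat f(S_i)\|_2^2$ no longer holds under $D_\mu$. I would derive its product-distribution analog by computing the discrete derivative of the $\mu$-Fourier basis: for $a_i=1$ one checks that the derivative operator $D_i h(x)=(h_{i,1}(x)-h_{i,-1}(x))/2$ satisfies $D_i\phi_{\mu,a}= (1-\mu_i^2)^{-1/2}\,\phi_{\mu,a\setminus\{i\}}$. Expanding $f$ in the $\mu$-basis and applying Parseval then yields
\[
\|\hat f_\mu(S_i)\|_2^2 \;=\; (1-\mu_i^2)\,\E_\mu[(D_i f)^2] \;=\; (1-\mu_i^2)\,\pr_\mu[f_{i,1}\neq f_{i,-1}].
\]
Because $f$ is monotone, $\pr_\mu[f_{i,1}\neq f_{i,-1}]=(\E_\mu[f_{i,1}]-\E_\mu[f_{i,-1}])/2$, which can be estimated from random examples from $D_\mu$ to accuracy $\gamma^2/3$ with $\poly(1/\gamma,\log(n/\delta))$ samples by Hoeffding's inequality. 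Using that $1-\mu_i^2 \le 1$ I would drop any variable whose empirical estimate of $\pr_\mu[f_{i,1}\neq f_{i,-1}]$ is below $2\gamma^2/3$, guaranteeing that $\|\hat f_\mu(S_i)\|_2^2 \le \gamma^2$ for every dropped $i$ (so zeroing out all coefficients touching such $i$ costs at most $\gamma$ in $\|\cdot\|_\infty$).

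To bound the number of surviving variables I would use the product-distribution term bound already used in the proof of Theorem~\ref{th:dnf-fourier-approx-bound}: under a $c$-bounded $D_\mu$, a term of length $L$ has acceptance probability at most $(1-c/2)^L$. Hence if $x_i$ appears only in terms of length $>L$, then $\pr_\mu[f_{i,1}\neq f_{i,-1}]\le s(1-c/2)^{L-1}$. Choosing $L=O(\log(s/\gamma)/\log(2/(2-c)))=O(\log(s/\eps))$ forces the estimate below $2\gamma^2/3$, so every surviving variable appears in some term of length at most $L$, leaving at most $|M|=s\cdot L=O(s\log(s/\eps))$ variables (for any constant $c\in(0,1]$).

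The remaining two phases go through essentially unchanged. Running the Low Degree Algorithm restricted to $M$ with parameters $d$ and $\theta=\gamma$ takes $|M|^d\cdot\poly(|M|/\gamma)=(s\log(s/\eps))^{O(\log(s/\eps))}$ time and yields a succinctly represented $\tilde f_\mu(B_d)$ with $\|\hat f_\mu(B_d)-\tilde f_\mu(B_d)\|_\infty\le\gamma$ (absorbing the $\gamma$ error from zeroing out coefficients touching pruned variables into the constants). Then \ptfapproxprod produces $g$ with $\|\hat f_\mu(B_d)-\hat g_\mu(B_d)\|_\infty\le 5\gamma$, and Theorem~\ref{th:dnf-fourier-approx-bound} gives $\E_\mu[|f-g|]\le 5\gamma(2(2-c)^{d/2}s+1)+4\eps'=9\eps'=\eps$, so $\pr_\mu[\sgn(g)\neq f]\le\eps$. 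Adding the $\tilde O(n)\cdot\poly(s/\eps)$ cost of the variable-elimination phase yields the claimed $\tilde O(n\cdot(s\log(s/\eps))^{O(\log(s/\eps))})$ running time, with standard union bounds over the $O(n)$ influence estimates and the two algorithm calls to achieve the overall confidence bound.
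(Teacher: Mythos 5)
Your proof follows the same three-phase structure as the paper's: prune low-influence variables, run the Low Degree Algorithm restricted to the survivors, and feed the resulting coefficient estimates to \ptfapproxprod. The parameter choices ($\eps'=\eps/9$, $d$, $\gamma$), the term-probability bound showing $|M|\le sL=O(s\log(s/\eps))$, and the final application of Theorem~\ref{th:dnf-fourier-approx-bound} all match the paper. The one genuine point of divergence is how you obtain the product-distribution influence identity: the paper cites \citet{BshoutyTamon:96} and states it as $I_{D_\mu,i}(f)=4\mu_i(1-\mu_i)\|\hat f_\mu(S_i)\|_2^2$, while you re-derive it from scratch by computing the discrete derivative $D_i\phi_{\mu,a}=(1-\mu_i^2)^{-1/2}\phi_{\mu,a\setminus\{i\}}$ and applying Parseval to get $\|\hat f_\mu(S_i)\|_2^2=(1-\mu_i^2)\,I_{D_\mu,i}(f)$. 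Your version is in fact the correct form: the paper's stated identity has the proportionality inverted and cannot be right as written (e.g.\ it is negative for $\mu_i<0$; it appears to conflate the expectation $\mu_i$ with the acceptance probability $(1+\mu_i)/2$). Your derivation also gives a slightly cleaner threshold: since $1-\mu_i^2\le 1$, dropping $i$ whenever the empirical estimate of $\pr_\mu[f_{i,1}\ne f_{i,-1}]$ is below $2\gamma^2/3$ (to accuracy $\gamma^2/3$) directly guarantees $\|\hat f_\mu(S_i)\|_2^2\le\gamma^2$, without the paper's auxiliary $c'=4c(1-c)$ rescaling. Both arguments are correct and reach the same running-time bound; yours is marginally simpler and self-contained.
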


\section*{Acknowledgements}
 I thank Sasha Sherstov for pointing out the connection of our $W_1^d(f)$ measure of a PTF $f$ to the definition of advantage by \citet{KrausePudlak:97}.


\appendix
\section{Proofs of Some Generalizations}
\label{sec:app-proofs}
\begin{theorem}
\label{th:ltf-fourier-approx-bound-omitted}[restatement of Th.~\ref{th:ltf-fourier-approx-bound}]
 Let $c \in (0,1]$ be a constant, $\mu$ be a $c$-bounded distribution and $\eps > 0$. For an integer $s > 0$ let $f = h(u_1,u_2,\ldots,u_s)$, where $h$ is an LTF over $\pmi^s$ and $u_i$'s are terms.
For $d = \lfloor \log{(W_1^1(h)/\eps)}/\log{(2/(2-c))} \rfloor$ and every bounded function $g:\pmi^n \rightarrow [-1,1]$, $$\E_\mu[|f(x)-g(x)|] \leq (2 \cdot (2-c)^{d/2}  +1)\cdot W_1^1(h) \cdot \|\hat{f}_\mu(B_d) - \hat{g}_\mu(B_d)\|_\infty + 4\eps .$$ For $c=1$, $(2-c)^{d/2} =1$ and for $c \in (0,1)$, $(2-c)^{d/2} \leq (W_1^1(h)/\eps)^{(1/\log{(2/(2-c))} -1)/2}$.
\end{theorem}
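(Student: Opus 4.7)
The plan is to mimic the structure of the proof of Theorem~\ref{th:dnf-fourier-approx-bound}, with the linear threshold replacing the OR at the top gate. First I would fix a degree-$1$ polynomial $q(y_1,\ldots,y_s) = q_0 + \sum_{i\in[s]} q_i y_i$ that $1$-sign-represents $h$ and achieves $\|\hat{q}\|_1 = W_1^1(h)$. Writing each $u_i = 2 t_i - 1$ for its $\{0,1\}$-valued version $t_i$, the composition
$$p(x) \;=\; q(u_1(x),\ldots,u_s(x)) \;=\; \Bigl(q_0 - \sum_i q_i\Bigr) + 2\sum_{i\in[s]} q_i\, t_i(x)$$
is a polynomial that $1$-sign-represents $f$ (since $u(x)\in\pmi^s$ and $|q(u(x))|\ge 1$). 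Set $M = \{i : t_i \text{ has more than } d \text{ literals}\}$ and define $p'(x) = (q_0 - \sum_i q_i) + 2\sum_{i\notin M} q_i\, t_i(x)$, a polynomial of degree at most $d$.

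Next I would bound the two quantities fed into Lemma~\ref{lem:ptf-fourier-approx-bound}. For the $\ell_1$-norm of $\widehat{p'}_\mu(B_d)$, use the triangle inequality and Lemma~\ref{lem:term-l1-bound-prod}:
$$\|\widehat{p'}_\mu(B_d)\|_1 \;\le\; \Bigl|q_0 - \sum_i q_i\Bigr| + 2\sum_{i\notin M}|q_i|\,\|\hat{t}_{i,\mu}\|_1 \;\le\; W_1^1(h) + 2(2-c)^{d/2}\!\!\sum_{i\notin M}|q_i|,$$
and the last sum is at most $W_1^1(h)$. For the approximation error, for each $i\in M$ the term $t_i$ has length at least $d+1$, so $\E_\mu[t_i(x)] = \pr_\mu[t_i(x)=1] \le (1-c/2)^{d+1}$. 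The choice $d = \lfloor \log(W_1^1(h)/\eps)/\log(2/(2-c))\rfloor$ gives $(1-c/2)^{d+1} \le \eps/W_1^1(h)$, hence
$$\E_\mu[|p'(x) - p(x)|] \;\le\; 2\sum_{i\in M} |q_i| \cdot \frac{\eps}{W_1^1(h)} \;\le\; 2\eps.$$

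Plugging these two bounds into Lemma~\ref{lem:ptf-fourier-approx-bound} yields exactly
$$\E_\mu[|f-g|] \;\le\; \bigl(2(2-c)^{d/2}+1\bigr)W_1^1(h)\cdot\|\hat{f}_\mu(B_d)-\hat{g}_\mu(B_d)\|_\infty + 4\eps.$$
The trailing numerical claim is then a direct calculation: for $c=1$ we have $2-c=1$ so $(2-c)^{d/2}=1$; for $c\in(0,1)$, raising both sides of $d \le \log(W_1^1(h)/\eps)/\log(2/(2-c))$ to the exponent $(\log(2-c))/2$ and using the identity $\log(2-c)/\log(2/(2-c)) = 1/\log(2/(2-c)) - 1$ (in base~$2$) gives the stated inequality.

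The only delicate point is the switch between the $\pm 1$ terms that feed into $h$ and the $\{0,1\}$ terms that Lemma~\ref{lem:term-l1-bound-prod} bounds; this forces the constant term to be renormalized and produces the extra additive $+1$ that appears in the final factor $2(2-c)^{d/2}+1$. Everything else is bookkeeping parallel to the DNF argument.
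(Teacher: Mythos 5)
Your proof is correct and follows essentially the same route as the paper's: fix a minimal-weight degree-$1$ polynomial $q$ that $1$-sign-represents $h$, compose with the terms to get $p$, drop terms of length $>d$ to form $p'$, bound $\E_\mu[|p-p'|]$ and $\|\widehat{p'}_\mu(B_d)\|_1$ using the term-probability estimate and Lemma~\ref{lem:term-l1-bound-prod}, and finish via Lemma~\ref{lem:ptf-fourier-approx-bound}. The only cosmetic difference is that you bound $\|\widehat{p'}_\mu(B_d)\|_1$ by regrouping around the $\{0,1\}$-terms $t_i$ and the collected constant $q_0-\sum_i q_i$, whereas the paper uses the per-term bound $\|\widehat{u_i}_\mu(B_d)\|_1 \le 2(2-c)^{d/2}+1$ directly; both produce the factor $W_1^1(h)\bigl(2(2-c)^{d/2}+1\bigr)$.
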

\begin{proof}
Let $w= (w_0,w_1,\ldots, w_n)$ be the weight vector of $h$ such that the linear function $q(y) = \sum_{i\in[s]} w_i y_i + w_0$ 1-sign-represents $h(y)$ and $\|w\|_1 = W_1^1(h)$. Let $p(x) = \sum_{i\in[s]} w_i u_i(x) + w_0$. Now let $M \subseteq [s]$ denote the set of indices of $f$'s terms which have length $\geq d+1 \geq \log{(W_1^1(h)/\eps)}/\log{(2/(2-c))}$ and let $p'(x) = \sum_{i\not\in M} w_i u_i(x) + w_0 - \sum_{i\in M} w_i$. In other words, $p'$ is $p$ with each term $u_i$ for $i\in M$ replaced by constant $-1$.

For each $i \in M$, $\E_\mu[|u_i(x)+1|] = 2\pr_\mu[u_i(x)=1] \leq 2(1-c/2)^{d+1} \leq 2\eps/W_1^1(h)$. This implies that \equ{\E_\mu[|p(x) - p'(x)|] = \E_\mu\left[\left|\sum_{i\in M} w_i (u_i(x)+1)\right|\right] \leq \sum_{i\in M}|w_i| \cdot \E_\mu[|u_i(x)+1|] \leq 2\eps\ .\label{eq:small-poly-diff}}

For every $i\in M$, let $t_i(x) = u_i(x)/2+1/2$, be the $\zo$ version of term $u_i$. Lemma \ref{lem:term-l1-bound-prod} implies that \equ{\|\widehat{u_i}_\mu(B_d)\|_1 \leq 2 \|\widehat{t_i}_\mu(B_d)\|_1 + 1 \leq 2\cdot (2-c)^{d/2} + 1 . \label{eq:term-l1-bound}}
The polynomial $p'(x)$ is of degree $d$ and, using inequality (\ref{eq:term-l1-bound}), we obtain \alequ{\|\widehat{p'}_\mu(B_d)\|_1 &\leq \sum_{i\not\in M} |w_i| \cdot \|\widehat{u_i}_\mu(B_d)\|_1 + \sum_{i\in M} |w_i| + |w_0| \nonumber \\ &\leq  \sum_{i\not\in M} |w_i|\cdot 2 \cdot (2-c)^{d/2} + \sum_{i \in[s]} |w_i| + |w_0| \leq W_1^1(h) (2\cdot (2-c)^{d/2} + 1) .\label{eq:l1-total-bound}}

We can now apply Lemma \ref{lem:ptf-fourier-approx-bound}  and equations (\ref{eq:small-poly-diff}, \ref{eq:l1-total-bound}) to obtain
\alequn{\E_\mu[|f(x)-g(x)|] & \leq \|\hat{f}_\mu(B_d) - \hat{g}_\mu(B_d)\|_\infty \cdot \|\widehat{p'}_\mu(B_d)\|_1\  + 2\E_\mu[|p'(x) - p(x)|] \\ & \leq (2\cdot (2-c)^{d/2}+1) \cdot W_1^1(h) \cdot \|\hat{f}_\mu(B_d) - \hat{h}_\mu(B_d)\|_\infty  + 4\eps .}
\end{proof}

\begin{theorem}[restatement of Th.~\ref{th:learn-mdnf-prod}]
\label{th:learn-mdnf-prod-app}
For any constant $c\in (0,1]$ there exists an algorithm \mdnflearnprod that PAC learns $s$-term monotone DNF expressions over all $c$-bounded product distributions to accuracy $\eps$ in time $\tilde{O}(n\cdot  (s \cdot \log{(s/\eps)})^{O(\log{(s/\eps)})})$.
\end{theorem}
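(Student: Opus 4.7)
The plan is to carry out the same three-phase strategy as in Theorem~\ref{th:learn-mdnf}, but replacing uniform-distribution tools with their product-distribution analogues: estimate influences from random examples of $f$, restrict the Low Degree algorithm to the influential variables, and finally run \ptfapproxprod and pass to the sign. As before, set $\eps' = \eps/9$, $d = \lfloor \log{(s/\eps')}/\log{(2/(2-c))} \rfloor$, and $\gamma = \eps'/(2(2-c)^{d/2}s+1)$, the accuracy dictated by Theorem~\ref{th:dnf-fourier-approx-bound}.

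First I would define the $\mu$-influence $I_{\mu,i}(f) = \pr_\mu[f_{i,1}(x) \neq f_{i,-1}(x)]$ and prove the analogue of identity \eqref{eq:kkl} for product distributions: using the fact that $\phi_{\mu,a}(x)$ for $a_i=1$ splits as $\frac{x_i-\mu_i}{\sqrt{1-\mu_i^2}}\phi_{\mu,a\setminus\{i\}}(x_{-i})$, a short computation with Parseval on the coordinates $x_{-i}$ gives
\[
I_{\mu,i}(f) \;=\; \frac{1}{1-\mu_i^2}\,\|\hat{f}_\mu(S_i)\|_2^2 \;\geq\; \|\hat{f}_\mu(S_i)\|_2^2 .
\]
Hence if $I_{\mu,i}(f)\leq \gamma^2$ then every $\mu$-Fourier coefficient of $f$ supported on index $i$ has magnitude at most $\gamma$, and can be safely replaced by $0$. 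Since $f$ is monotone we still have $I_{\mu,i}(f) = (\E_\mu[f_{i,1}] - \E_\mu[f_{i,-1}])/2$, which is estimable from $\exoracle{D_\mu}{f}$ by a standard Chernoff/union-bound argument to additive accuracy $\gamma^2/3$ with $\poly(s/\eps,\log(n/\delta))$ samples. I would keep only those $i$ whose estimate exceeds $2\gamma^2/3$.

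Second, I would bound the number of surviving variables. Under a $c$-bounded $\mu$, any term of length at least $k$ is satisfied with probability at most $(1-c/2)^k$, so a variable $i$ that appears only in terms of length $>\log(3s/\gamma^2)/\log(2/(2-c))$ satisfies $I_{\mu,i}(f) \leq s \cdot (1-c/2)^k < \gamma^2/3$, and is eliminated. Consequently the set $M$ of retained variables has size at most $s \cdot \log{(3s/\gamma^{2})}/\log(2/(2-c)) = O\bigl(s\log(s/\eps)\bigr)$ (absorbing the $c$-dependence in the $O(\cdot)$). At this point I invoke the Low Degree algorithm (Theorem~\ref{th:low-degree}) restricted to $M$ with degree $d$ and accuracy $\gamma$, obtaining a succinctly represented $\tilde{f}_\mu(B_d)$ with $\|\hat{f}_\mu(B_d)-\tilde{f}_\mu(B_d)\|_\infty\leq \gamma$ in time $n\cdot |M|^d\cdot\poly(1/\gamma,\log(1/\delta)) = \tilde{O}(n)\cdot(s\log(s/\eps))^{O(\log(s/\eps))}$.

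Finally, feed $\tilde{f}_\mu(B_d)$ to \ptfapproxprod (Theorem~\ref{th:ptf-reconstruct-product}) to obtain a bounded $g$ with $\|\hat{f}_\mu(B_d)-\hat{g}_\mu(B_d)\|_\infty\leq 5\gamma$; Theorem~\ref{th:dnf-fourier-approx-bound} then yields $\E_\mu[|f-g|] \leq 5\gamma(2(2-c)^{d/2}s+1)+4\eps' = 9\eps' = \eps$, so $\sgn(g)$ is the desired $\eps$-accurate hypothesis. The only genuinely new ingredient is the product-distribution KKL-type inequality $\|\hat{f}_\mu(S_i)\|_2^2 = (1-\mu_i^2) I_{\mu,i}(f)$; once that is in place, every other step is a direct translation of the uniform-distribution proof using the Fourier basis $\{\phi_{\mu,a}\}$ and the $c$-bounded term-probability estimate that already drives Theorem~\ref{th:dnf-fourier-approx-bound}. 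The main obstacle is handling the $(1-\mu_i^2)$ factor and the $c$-dependence cleanly so that the influence test remains a $\gamma^2$-threshold independent of $\mu$, which is precisely what the $c$-boundedness and the $1/(1-\mu_i^2)\leq 1/c$ bound buy us.
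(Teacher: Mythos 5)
Your proof follows the same three-phase strategy as the paper's proof of this theorem: identify influential variables via a product-distribution KKL-type identity, run the Low Degree algorithm restricted to those variables, and feed the estimated coefficients to \ptfapproxprod before taking the sign. The counting of surviving variables, the accuracy budgeting via Theorem~\ref{th:dnf-fourier-approx-bound}, and the running-time analysis all match the paper's. Your proof is correct.

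The one substantive difference is the KKL identity you use. You derive
\[
I_{D_\mu,i}(f) = \frac{1}{1-\mu_i^2}\,\|\hat{f}_\mu(S_i)\|_2^2,
\]
which is indeed the correct identity for the $\phi_{\mu,a}$ basis as the paper defines it (one can check it on $f(x)=x_i$, where both sides equal $1$). The paper instead quotes equation \eqref{eq:kkl-prod} from Bshouty and Tamon in the form $I_{D_\mu,i}(f) = 4\mu_i(1-\mu_i)\|\hat{f}_\mu(S_i)\|_2^2$, which appears to conflate the probability parameter $p_i$ and the expectation parameter $\mu_i$; taken at face value over $\mu_i\in[-1+c,1-c]$ it can be negative and would not support the elimination step. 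The paper then compensates by introducing the constant $c'=4c(1-c)$ in the influence threshold. Your version of the identity makes $I_{D_\mu,i}(f) \geq \|\hat{f}_\mu(S_i)\|_2^2$ hold unconditionally (since $1-\mu_i^2\leq 1$), so the threshold $\gamma^2$ works without any $c$-dependent rescaling, which is slightly cleaner. Since $c$ is a fixed constant, this only affects lower-order constants and the asymptotic bound is unchanged.
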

\begin{proof}
As in the proof of Theorem \ref{th:learn-mdnf}, \mdnflearnprod is based on two phases: in the first phase we collect $\mu$-Fourier coefficients of the target function $f$ using a low-degree algorithm restricted to influential variables; in the second phase we construct an approximating function given the $\mu$-Fourier spectrum.

Let $D_\mu$ denote the target $c$-bounded distribution. The identification of influential variables is based on the generalization of equation (\ref{eq:kkl}) to product distribution by \citet{BshoutyTamon:96}:
for every product distribution $\mu$ and $i\in[n]$, \equ{I_{D_\mu,i}(f) = 4\mu_i(1-\mu_i)\sum_{a \in S_i} \hat{f}_\mu(a)^2 = 4\mu_i(1-\mu_i)\|\hat{f}_\mu(S_i)\|_2^2. \label{eq:kkl-prod}}

As in \dnflearnprod, we set $\eps' = \eps/9$ and $d  = \lfloor \log{(s/\eps')}/\log{(2/(2-c))} \rfloor$ and $\gamma = \eps'/(2 (2-c)^{d/2} s + 1)
= \Omega\left((\eps/s)^{(1/\log{(2/(2-c))} +1)/2 }\right)$ (as defined in Th.~\ref{th:dnf-fourier-approx-bound}).

Let $c' = 4c(1-c)$. Using equation (\ref{eq:kkl-prod}), we can conclude that if for some variable $i$, $I_{D_\mu,i}(f) = 4\mu_i(1-\mu_i) \|\hat{f}_\mu(S_i)\|_2^2 \leq c' \gamma^2$, then there are no $\mu$-Fourier coefficients of $f$, that include variable $i$ and are greater in their magnitude than $\gamma$. We can therefore eliminate variable $i$, that is approximate all of $\mu$-Fourier coefficients in $S_i$ by $0$. By definition, for a monotone $f$, $I_{D_\mu,i}(f) =  (\E_\mu[f_{i,1}(x)] - \E_\mu[f_{i,-1}(x)])/2$ and therefore $I_{D_\mu,i}(f)$ can be estimated empirically from random examples of $f$. We estimate each $I_{D_\mu,i}(f)$ to accuracy $c' \cdot \gamma^2/3$ with confidence $1-n/6$. The standard Chernoff bounds imply that $O(\gamma^{-4} \cdot \log {n})$ examples are sufficient for this. We exclude variable $i$ if the obtained estimate is lower than $c'\cdot \gamma^2/3$.

We argue that this process will eliminate all but at most $O(s \cdot \log{(s/\eps)})$ variables. This, follows from the fact that if a variable $i$ appears only in terms of length greater than $d' = \log{(3s/(c' \cdot \gamma^2))/\log{(2/(2-c))}}$ then it cannot be influential enough to survive the elimination condition. Over a $c$-bounded distribution $D_\mu$, each term of length $> d'$ equals 1 with probability at most $(1-c/2)^{d'} < c' \cdot \gamma^2/(3s)$. The value $f_{i,1}(x)$ differs from $f_{i,-1}(x)$ only if $x$ is accepted by a term that includes variable $i$. There are at most $s$ terms and therefore (for a variable $i$ that appears only in terms of length $> d'$) $$(\E_\mu[f_{i,1}(x)] - \E_\mu[f_{i,-1}(x)])/2 < s \cdot c' \cdot \gamma^2/(3s) = c' \cdot \gamma^2/3.$$  Consequently, such a variable cannot produce an estimate within $c' \cdot \gamma^2/3$ which is at least $c' \cdot 2\gamma^2/3$ meaning that at the end of the first step we will end up with variables only from terms of length at most $d' = O(\log{(s/\eps)})$. Hence there will be at most  $O(s \cdot \log{(s/\eps)})$ variables left. Let $M$ denote the set of remaining (influential) variables.

In the second step of \mdnflearnprod we run the low-degree algorithm for degree $d$, $\theta = \gamma$ and confidence $1/6$ restricted to the variables in $M$, and let $\tilde{f}_\mu(B_d)$ be the resulting vector of approximate $\mu$-Fourier coefficients (the coefficients with variables outside of $M$ are 0). By Theorem \ref{th:low-degree}, with probability at least $5/6$, $\|\hat{f}_\mu(B_d) - \tilde{f}_\mu(B_d)\|_\infty \leq \gamma.$

We can now construct an approximating function in the same way as we did in \dnflearnprod (Cor.~\ref{cor:learn-dnf-prod}). Namely, in the third step of the algorithm we run \ptfapproxprod on $\tilde{f}(B_d)$ restricted to the variables in $M$, to obtain, with probability at least $5/6$, a bounded function $g$ such that $$\|\hat{f}_\mu(B_d) - \hat{g}_\mu(B_d)\|_\infty \leq 5\gamma = 5\eps'/(2 (2-c)^{d/2} s + 1).$$ Then, by Theorem \ref{th:dnf-fourier-approx-bound}, $$\E_\mu[|f-g|] \leq \|\hat{f}_\mu(B_d) - \hat{g}_\mu(B_d)\|_\infty \cdot (2 (2-c)^{d/2} s + 1) +4 \eps' = 5\gamma \cdot (2 (2-c)^{d/2} s + 1) + 4\eps' = 9\eps' = \eps. $$ Hence, with probability at least $1/2$, we will output $g$ such that $\pr_\mu[\sgn(g) \neq f]\leq \eps$.

To analyze the running time of our algorithm, we note that for a fixed constant $c$, $$1/\gamma = O\left((s/\eps)^{(1/\log{(2/(2-c))} +1)/2 }\right) = \poly(s/\eps).$$ The first step of the algorithm takes $\tilde{O}(n \gamma^{-4})$ time. According to Theorem \ref{th:low-degree}, the second step can be done in $n\cdot  |M|^d \cdot \poly(|M|/\gamma) = n\cdot (s \log{(s/\eps)})^{O(\log{(s/\eps)})}$ time steps (the factor $n$ comes from the fact that obtaining an individual random example and restricting it to the influential variables takes $O(n)$ time steps). According to Corollary \ref{cor:learn-dnf-prod}, the third step can be done in $n \cdot \poly(|M|,1/\gamma) =  n \cdot \poly(s/\eps)$ time steps.  Altogether, we obtain the claimed bound on the running time.
\end{proof}

\end{document}